\theoremstyle{plain}
\newtheorem{theorem}{Theorem}[section]
\theoremstyle{definition}
\theoremstyle{remark}
\definecolor{myred}{HTML}{F54254}
\definecolor{myorange}{HTML}{FFB135}
\definecolor{mygreen}{HTML}{10BD35}
\definecolor{myblue}{HTML}{598BE7}
\definecolor{mypurple}{HTML}{9A1C6B}
\definecolor{plgray}{HTML}{999999}
\renewcommand{\tt}[1]{\texttt{#1}}
\renewcommand{\phi}{\varphi}
\DeclarePairedDelimiterX{\infdivx}[2]{(}{)}{%
  #1\;\delimsize\|\;#2%
}
\newcommand{\pl}[1]{{\color{plgray} #1}}
\newlength{\myboxwidth}
\setlist[itemize]{itemsep=1pt, leftmargin=20pt}
\newcommand{\cutsectionup}{\vspace{-2pt}}
\newcommand{\cutsectiondown}{\vspace{-3pt}}
\newcommand{\cutsubsectionup}{\vspace{-2pt}}
\newcommand{\cutsubsectiondown}{\vspace{-3pt}}
\title{Dual Goal Representations}
\author{Seohong Park\thanks{Equal contribution.}\:\:$^{1}$ \quad Deepinder Mann\footnotemark[1]\:\:$^{1}$ \quad Sergey Levine$^{1}$ \\
$^{1}$University of California, Berkeley \\
\texttt{\{seohong, dmann\}@berkeley.edu}
}
\begin{document}

\maketitle

\begin{abstract}
In this work, we introduce \textbf{dual goal representations} for goal-conditioned reinforcement learning (GCRL).
A dual goal representation characterizes a state by ``the set of temporal distances from all other states'';
in other words, it encodes a state through its \emph{relations} to every other state, measured by temporal distance.
This representation provides several appealing theoretical properties.
First, it depends only on the intrinsic dynamics of the environment and is invariant to the original state representation.
Second, it contains provably sufficient information to recover an optimal goal-reaching policy,
while being able to filter out exogenous noise.
Based on this concept, we develop a practical goal representation learning method
that can be combined with any existing GCRL algorithm.
Through diverse experiments on the OGBench task suite,
we empirically show that dual goal representations consistently improve
offline goal-reaching performance across $20$ state- and pixel-based tasks.

Blog post: \url{https://seohong.me/blog/dual-representations/}
\end{abstract}

\begin{figure}[h!]
    \centering
    \includegraphics[width=0.8\textwidth]{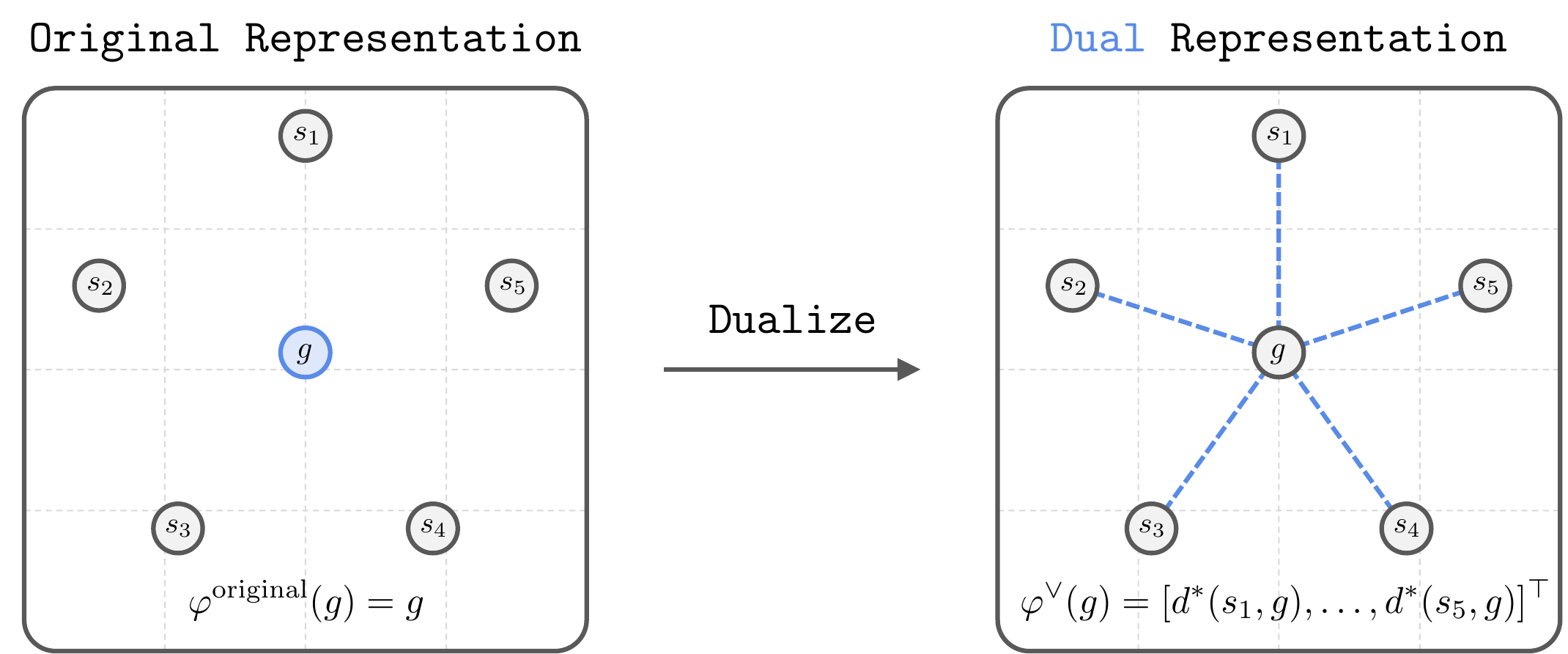}
    \caption{
    \footnotesize
    \textbf{Dual goal representations.}
    A dual goal representation $\phi^\vee(g)$ is
    defined as the set of temporal distances $d^*(\cdot, g)$ from all other states.
    This representation has a number of appealing properties:
    it only depends on the intrinsic dynamics of the environment,
    contains sufficient information to express an optimal goal-reaching policy,
    and is able to discard exogenous noise.
    }
    \label{fig:teaser}
\end{figure}

\cutsectionup
\section{Introduction}
\cutsectiondown

Representation learning lies at the heart of modern machine learning.
The successes of large-scale unsupervised pre-training~\citep{gpt3_brown2020, dalle2_ramesh2022} have shown that
rich representations can enable both efficient adaptation and strong generalization across tasks.
In this work, we study representation learning for an important class of reinforcement learning (RL) problems,
goal-conditioned reinforcement learning (GCRL)~\citep{gcrl_kaelbling1993, ogbench_park2025}.
The objective of GCRL is to train a multi-task policy capable of reaching any target state from any starting state in minimal time.
Many real-world tasks, including navigation, object manipulation, and even games, naturally fit into this goal-reaching framework.

An important question in GCRL is how to represent goals.
While many works simply use the original state observations
provided by the environment to represent goal states~\citep{gcrl_kaelbling1993, ogbench_park2025},
these observations are not necessarily best structured to describe goals (\ie, tasks),
and often contain superfluous information, such as uncontrollable noise.
On the other hand, a good, well-designed goal representation can accelerate policy learning during training,
improve goal-reaching performance at test time,
and even potentially enable robust generalization to out-of-distribution goals.

To achieve these benefits, we define two desiderata for an ideal goal representation.
First, it must be able to capture sufficient information to represent an optimal goal-conditioned policy (\emph{sufficiency}).
Second, it must be able to discard irrelevant information, such as background noise, that does not affect goal-reaching (\emph{noise invariance}).

Our key idea to achieve these desiderata is to represent a goal as \textbf{the set of temporal distances from all other states}.
More specifically, we represent a goal as a \emph{functional} (\ie, a function from the state space to $\mathbb{R}$)
that takes a state as input and outputs the corresponding temporal distance.
We call this a \textbf{dual goal representation}.
We theoretically show that this dual goal representation satisfies both desiderata (sufficiency and noise invariance):
\ie, we can represent an optimal policy solely based on the dual representation, and the representation is invariant to exogenous noise.

Building on this theoretical insight, we propose a new practical goal representation learning recipe for offline goal-conditioned RL.
Our method first trains a parameterized goal-conditioned value function with an offline value learning algorithm to compute temporal distances.
Then, we take the goal embedding of the learned value function as an approximate representation of the functional,
which is then used as a goal representation for the policy.
We demonstrate that this goal representation learning recipe consistently improves the performance and generalizability of goal-conditioned policies,
across diverse robotic locomotion and manipulation environments.

\textbf{Contributions.} Our main contributions are the theoretical formulation and empirical verification of dual goal representations.
First, we theoretically show that dual goal representations can learn a sufficient and noise-invariant representation for optimal goal reaching.
Second, we demonstrate that our empirical recipe based on this principle improves performance and generalization capabilities in practice,
outperforming previous goal representation learning approaches on $20$ diverse goal-reaching tasks in robotic locomotion and manipulation.

\cutsectionup
\section{Preliminaries}
\cutsectiondown
\label{sec:prelim}

\textbf{Controlled Markov processes and goal-conditioned RL.}
Let $\gS$ be a state space, $\gA$ be an action space,
and $p(\pl{s'} \mid \pl{s}, \pl{a}): \gS \times \gA \to \Delta(\gS)$ be a transition dynamics kernel,
where $\Delta(\gX)$ denotes the set of valid probability distributions over a space $\gX$
and placeholder variables are denoted in gray.
For theoretical results, we assume that $\gS$ and $\gA$ are discrete spaces.
This is only for the sake of simplicity in discussion, and our results can be extended to continuous state spaces with appropriate modifications.

A controlled Markov process (CMP)
is defined by the tuple $\gM = (\gS, \gA, p)$.
Given a goal-conditioned policy $\pi(\pl{a} \mid \pl{s}, \pl{g}): \gS \times \gS \to \Delta(\gA)$,
and a goal-conditioned reward function $r(\pl{s}, \pl{g}): \gS \times \gS \to \sR$,
we define $V^\pi(s, g)=\E_{\tau \sim p^\pi(\pl{\tau} \mid \pl{s_0} = s, g)}[\sum_{t=0}^\infty \gamma^t r(s_t, g)]$
and $Q^\pi(s, a, g)=\E_{\tau \sim p^\pi(\pl{\tau} \mid \pl{s_0} = s, \pl{a_0} = a, g)}[\sum_{t=0}^\infty \gamma^t r(s_t, g)]$,
where $\gamma \in (0, 1)$ denotes a discount factor
and $p^\pi$ denotes the trajectory distribution induced by the policy and transition dynamics.
Unless otherwise mentioned,
we define $r(s, g)=\mathbb{I}(s=g)$, where $\mathbb{I}$ is the $0$-$1$ indicator function,
and assume that the agent enters an absorbing state once it reaches the goal (\ie, it can get a reward of $1$ at most once).
We define the optimal value functions as
$V^*(s, g) = \max_{\pi:\gS \times \gS \to \Delta(\gA)} V^\pi(s, g)$ and
$Q^*(s, a, g) = \max_{\pi:\gS \times \gS \to \Delta(\gA)} Q^\pi(s, a, g)$.
We also define the \emph{temporal distance} function as $d^*(s, g) = \log_\gamma V^*(s, g)$.
Note that, in deterministic environments, $d^*(s, g)$ corresponds to the length of a shortest path from $s$ to $g$.

The objective of offline goal-conditioned RL is to find a goal-conditioned policy $\pi$
that maximizes $V^\pi(s, g)$ for all $s, g \in \gS$,
from a dataset $\gD = \{\tau^{(i)}\}_{i=1}^M$ consisting of
state-action trajectories $\tau = (s_0, a_0, s_1, \ldots, s_T)$ with no environment interactions.

\textbf{Block CMPs.}
Some theoretical results in this work are built upon the block CMP (BCMP) framework~\citep{bmdp_du2019, ppe_efroni2022, acstate_lamb2022}.
A BCMP is defined by a tuple $(\gS, \gZ, \gA, p, p^e, p^d)$,
where $\gS$ is an observation space, $\gZ$ is a latent state space, $\gA$ is an action space,
$p(\pl{z'} \mid \pl{z}, \pl{a}): \gZ \times \gA \to \gZ$ is a latent transition dynamics kernel,
$p^e(\pl{s} \mid \pl{z}): \gZ \to \Delta(\gS)$ is an observation emission distribution,
and $p^\ell(\pl{s}): \gS \to \gZ$ is a latent mapping function,
such that $p^\ell(s) = z$ for all $z \in \gZ$ and $s \sim p^e(\pl{s} \mid z)$.
Intuitively, a BCMP can be thought of as a latent CMP $(\gZ, \gA, p)$ augmented with noisy observations,
such that the observation distributions from different latent states always have \emph{disjoint} supports (\ie, every observation maps to a unique latent state).

In a BCMP, we define a goal-conditioned reward function as $r^\ell(s, g) = \mathbb{I}(p^\ell(s) = p^\ell(g))$.
Note that this only depends on the latent states.
Given a goal-conditioned policy, $\pi(\pl{a} \mid \pl{s}, \pl{g}): \gS \times \gS \to \Delta(\gA)$,
we define the value and Q functions and temporal distance function similarly as before,
but with the modified reward function $r^\ell$.
The BCMP framework encompasses many real-world scenarios~\citep{ppe_efroni2022, acstate_lamb2022},
including image-based robotic control
(where latent states correspond to the ground-truth poses of the robot arm and objects,
and the emission function corresponds to a noisy yet fully observable rendering of the current state).

\cutsectionup
\section{Dual Goal Representations}
\cutsectiondown
\label{sec:dgr}

The main aim of this work is to find an effective goal representation $\phi(\pl{g}): \gS \to \gW$
for a goal-conditioned policy $\pi(\pl{a} \mid \pl{s}, \phi(\pl{g}))$ parameterized by $\phi$,
where $\gW$ denotes a latent representation space.
A good goal representation can offer several benefits.
During training, it can reduce the learning complexity of the policy network $\pi$
by providing structured information about the goal state.
During evaluation, it can help the agent generalize to unseen, potentially out-of-distribution goals.

What constitutes an ideal goal representation?
We define two desiderata.
First, it must be able to capture enough information about the state to represent an optimal policy (\emph{sufficiency}).
Second, it must be able to discard information irrelevant to goal-reaching (\emph{noise invariance}).
In other words,
an ideal goal representation should be able to maintain
only a necessary \emph{and} sufficient amount of information to perform optimal goal-reaching.
This will make the agent focus on only the controllable part of the state,
enabling faster policy learning and better generalization.

\cutsubsectionup
\subsection{The Idea}
\cutsubsectiondown

Our main idea to achieve these desiderata is
to represent a goal as \textbf{the set of the temporal distances from all other states}.
For example, in a discrete state space $\gS = \{s_1, s_2, \ldots, s_K\}$, this corresponds to
the vector consisting of all temporal distances to that goal; \ie, $\phi(g) = [d^*(s_1, g), d^*(s_2, g), \ldots, d^*(s_K, g)]^\top$.
In the general case, this representation corresponds to a \emph{function} from $\gS$ to $\sR$
(such a function is called a \emph{functional}).

Formally, for a CMP $\gM = (\gS, \gA, p)$,
we define $\phi^\vee: \gS \to (\gS \to \sR)$ as
\begin{align*}
    \phi^\vee(g) = (s \mapsto d^*(s, g))
\end{align*}
for $s, g \in \gS$.
In other words, $\phi^\vee(g)(s) = d^*(s, g)$.
We call $\phi^\vee$ the \textbf{dual goal representation} (or simply dual representation) function of the CMP $\gM$.

Intuitively, dual goal representations replace
a goal state with its \emph{relations} to all other states, measured by temporal distance.
This has several benefits.
First, the resulting representation is \emph{invariant} to the original state representation given by the environment,
as it depends only on the intrinsic temporal dynamics of the environment.
Second, it can be shown that this representation retains sufficient information to represent an optimal goal-reaching policy,
while being able to discard exogenous noise,
as we formalize in the following section.

\cutsubsectionup
\subsection{Theoretical Properties}
\cutsubsectiondown

We now state two important theoretical properties of dual goal representations.
The first property is sufficiency:

\begin{theorem}[Sufficiency of Dual Goal Representations] \label{thm:suff}
Let $\gM=(\gS, \gA, p)$ be a CMP and $\phi^\vee$ be its dual goal representation function.
Then, there exists a deterministic policy $\pi^\vee: \gS \times \gS^\vee \to \gA$ that takes a dual goal representation as input,
such that its induced policy $\tilde \pi: \gS \times \gS \to \gA$ defined as
$\tilde \pi(s, g) := \pi^\vee(s, \phi^\vee(g))$ satisfies $V^{\tilde \pi} (s, g) = V^*(s, g)$ for all $s, g \in \gS$.
\end{theorem}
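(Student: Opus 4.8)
The plan is to show that the dual goal representation $\phi^\vee(g)$ contains enough information to recover an optimal action at every state, by constructing $\pi^\vee$ so that it extracts the optimal value function from its second argument and greedily acts with respect to it. The key observation is that the optimal value $V^*$, the optimal $Q$-function $Q^*$, and the temporal distance $d^*$ are all invertible transforms of one another, and the dual representation stores exactly the temporal-distance information from every state. So the strategy is: first recover $V^*(\cdot, g)$ from $\phi^\vee(g)$, then recover $Q^*(\cdot, \cdot, g)$ via a one-step Bellman backup, then define $\pi^\vee$ to act greedily.

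Concretely, I would proceed in the following steps. First, observe that by definition $\phi^\vee(g)(s) = d^*(s,g) = \log_\gamma V^*(s,g)$, so for every state $s$ the map $\pi^\vee$ can read off $V^*(s,g) = \gamma^{\,\phi^\vee(g)(s)}$ directly from its functional argument evaluated at $s$ — no additional information about $g$ is needed. Second, I would reconstruct the optimal $Q$-function purely from $V^*(\cdot, g)$ and the (known) transition kernel $p$. Since the reward is $r(s,g) = \mathbb{I}(s = g)$ with an absorbing goal, the Bellman optimality equation gives $Q^*(s,a,g) = \mathbb{I}(s = g) + \gamma \sum_{s'} p(s' \mid s, a) V^*(s', g)$ (with the convention that the goal is absorbing, so the reward term is handled appropriately). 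Crucially, every quantity on the right-hand side is available: the values $V^*(s', g)$ for all $s'$ are recoverable from the functional $\phi^\vee(g)$, and $p$ is a fixed property of the CMP that $\pi^\vee$ may depend on. Third, I would define $\pi^\vee(s, \phi^\vee(g)) := \arg\max_{a \in \gA} Q^*(s, a, g)$, breaking ties by any fixed rule so that the policy is deterministic and well-defined as a function of the pair $(s, \phi^\vee(g))$.

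The final step is to verify that the induced policy $\tilde\pi(s,g) = \pi^\vee(s, \phi^\vee(g))$ attains the optimal value. This follows from the standard fact that a deterministic policy acting greedily with respect to $Q^*$ is optimal: since $\tilde\pi(s,g) \in \arg\max_a Q^*(s,a,g)$ for every $s$, the policy-evaluation Bellman equation for $\tilde\pi$ coincides with the Bellman optimality equation, whose unique fixed point is $V^*$; hence $V^{\tilde\pi}(s,g) = V^*(s,g)$ for all $s, g$. Because $\gamma \in (0,1)$, the relevant Bellman operators are contractions and the fixed point is unique, so this identification is rigorous.

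The main subtlety — rather than a genuine obstacle — is ensuring that $\pi^\vee$ is genuinely a well-defined function of $\phi^\vee(g)$ alone (together with $s$), and not of $g$ itself. The point to handle carefully is that the greedy action depends on the values $V^*(s', g)$ at the \emph{successor} states $s'$ of $s$, and these are all encoded in the single functional $\phi^\vee(g)$ since $\phi^\vee(g)$ is evaluated pointwise over the entire state space. Thus $\pi^\vee$ only ever queries $\phi^\vee(g)$, never $g$ directly, which is exactly what the statement requires. A second minor point is the treatment of the absorbing goal state in the Bellman backup, but since this only affects the reward bookkeeping at $s = g$ and the dual representation already records $d^*(g,g) = 0$, i.e. $V^*(g,g) = 1$, it is consistent and causes no difficulty.
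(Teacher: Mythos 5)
Your proposal is correct and takes essentially the same route as the paper's proof: recover $V^*(\cdot, g) = \gamma^{\phi^\vee(g)(\cdot)}$ pointwise from the functional, use the known kernel $p$ to form the one-step backup, act greedily (which coincides with $\argmax_a Q^*(s, a, g)$ since the reward term is action-independent), and invoke the standard optimality of greedy policies with respect to $Q^*$. The only cosmetic difference is that the paper defines $\pi^\vee(s, f)$ for an \emph{arbitrary} functional $f \in \gS^\vee$, which sidesteps the well-definedness check you perform for representations in the image of $\phi^\vee$.
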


The proof can be found in \Cref{sec:proofs}.
Intuitively, the above theorem states that the dual goal representation function $\phi^\vee$ retains enough information
so that one can recover an optimal policy that depends only on dual goal representations.
The second property is noise invariance:

\begin{theorem}[Noise Invariance of Dual Goal Representations] \label{thm:min}
Let $\gM=(\gS, \gZ, \gA, p, p^e, p^\ell)$ be a BCMP and $\phi^\vee$ be its dual goal representation function.
Let $g_1, g_2 \in \gS$ be two goal observations from the same latent state; \ie, $p^\ell(g_1) = p^\ell(g_2)$.
Then, they have the same dual goal representation; \ie, $\phi^\vee(g_1) = \phi^\vee(g_2)$.
\end{theorem}

Again, the proof can be found in \Cref{sec:proofs}.
Intuitively, this theorem states that, under some technical assumptions (based on the BCMP framework described in \Cref{sec:prelim}),
dual goal representations are invariant under exogenous noise in the observation space.
Combining \Cref{thm:suff,thm:min}, we conclude that dual goal representations satisfy
both desiderata described at the beginning of \Cref{sec:dgr}.
\begin{tcolorbox}[
  enhanced,
  breakable,
  float,
  floatplacement=t!,
  title=\textbf{Digressive remark:} Connections to other concepts in mathematics and machine learning,
  colframe=myblue,
  colback=myblue!8,
  coltitle=white,
  parbox=false,
  left=5pt,
  right=5pt,
  toprule=2pt,
  titlerule=1pt,
  leftrule=1pt,
  rightrule=1pt,
  bottomrule=1pt,
]
The dual goal representation is based on the philosophy of a ``relative point of view'' 
(\ie, understanding an object via its relation to all other objects).
This is analogous to various concepts in mathematics and machine learning,
and we highlight a few examples.
In category theory,
the \emph{Yoneda lemma} implies that we can identify an object
through its morphisms to all other objects in the same category~\citep{category_riehl2017}.
In kernel machines and functional analysis,
the \emph{Riesz representation theorem} implies that we can uniquely represent a point in a Hilbert space
as a linear functional in its dual space~\citep{real_folland1999, high_wainwright2019}.
The dual goal representation is named after this analogy.
\end{tcolorbox}

\cutsectionup
\section{Practical Instantiation}
\cutsectiondown
\label{sec:practical}

There are two challenges with implementing a practical learning algorithm for dual goal representations.
First, the functional form of dual goal representations cannot be directly implemented in practice,
unless the state space is small and discrete
(in which case we can represent a function as a finite-dimensional vector).
Second, we need to know the temporal distance function $d^*$.

\textbf{Approximating functionals.}
We address the first challenge
by employing a \emph{parameterized} temporal distance function.
Namely, we model the temporal distance function as follows:
\begin{align}
    d^*(s, g) = f(\psi(s), \phi(g)),  \label{eq:dgr_agg}
\end{align}
where $\psi(\pl{s}): \gS \to \sR^{N}$ and $\phi(\pl{g}): \gS \to \sR^N$ are respectively the state and goal embeddings,
and $f: \sR^{N} \times \sR^{N} \to \sR$ is a (simple) aggregation function that combines the information from both embeddings to compute the temporal distance.
Then, we treat the output of the goal embedding, $\phi(g)$,
as a practical finite-dimensional representation of the functional dual goal representation, $\phi^\vee(g)$.
Intuitively, this makes $\phi$ capture temporal information about the goal,
in the sense that we can compute the temporal distance by pairing with any other state $s$ via $f(\psi(s), \phi(g))$.

For the aggregation function $f$, we use the following inner product parameterization: %
\begin{align}
    f(\psi(s), \phi(g)) = \psi(s)^\top \phi(g).
\end{align}
We note that this inner product form is \emph{universal}~\citep{metra_park2024},
meaning that it is expressive enough to approximate any two-variable function (on a compact set) up to an arbitrary accuracy.

\textbf{Approximating temporal distance.}
Next, we approximate the temporal distance function $d^*$ in practice,
using an existing offline goal-conditioned value learning algorithm.
In particular, we employ goal-conditioned IQL~\citep{iql_kostrikov2022, ogbench_park2025} to approximate $d^*$
(to be precise, a transformed variant of $d^*$ via $V^*$, as we explain below).
While dual goal representations can be instantiated with any other goal-conditioned value learning (or temporal distance learning) algorithm,
we find that goal-conditioned IQL generally leads to the best performance while remaining simple.

Specifically, %
goal-conditioned IQL trains a parameterized goal-conditioned value function $V(\pl{s}, \pl{g}) = f(\psi(\pl{s}), \phi(\pl{g})): \gS \times \gS \to \sR$
and a goal-conditioned Q function $Q(\pl{s}, \pl{a}, \pl{g}): \gS \times \gA \times \gS \to \sR$ with the following losses~\citep{iql_kostrikov2022}:
\begin{align}
    L(\psi, \phi) &= \E_{s, a, g \sim \gD}\left[\ell^2_\kappa(f(\psi(s), \phi(g)) - \bar Q(s, a, g))\right], \\
    L(Q) &= \E_{s, a, s', g \sim \gD}\left[(Q(s, a, g) - r(s, g) - \gamma f(\psi(s'), \phi(g)))^2\right],
\end{align}
where $\ell_\kappa^2$ denotes the expectile loss with expectile $\kappa$~\citep{iql_kostrikov2022}
and $\bar Q$ denotes the target Q network~\citep{dqn_mnih2013}.

If the reward function is given as the $0$-$1$ indicator function, $r(s, g) = \mathbb{I}(s = g)$,
we have the relation $V^*(s, g) = \gamma^{d^*(s, g)}$ (\Cref{sec:prelim}).
Hence, when the IQL losses converge,
$f(\psi(s), \phi(g)) = \psi(s)^\top \phi(g)$ approximates this transformed temporal distance function, $V^*(s, g) = \gamma^{d^*(s, g)}$.
In practice, 
we use the $-1$-$0$ reward function (\ie, $r(s, g) = \mathbb{I}(s = g) - 1$)
instead of the $0$-$1$ indicator function, following \citet{ogbench_park2025}.
While this leads to a slightly different relation $V^*(s, g) = -(1-\gamma^{d^*(s, g)})/(1-\gamma)$,
we found this version to work better in practice.
Note that regardless of the choice of reward function,
$f(\psi(s), \phi(g))$ still approximates a (transformed) temporal distance function.

\textbf{Downstream offline goal-conditioned RL.}
After obtaining a dual goal representation $\phi$,
we train a parameterized goal-conditioned policy $\pi(\pl{a} \mid \pl{s}, \phi(\pl{g}))$
(and optionally goal-conditioned value functions, $V^\mathrm{GCRL}(\pl{s}, \phi(\pl{g}))$ and $Q^\mathrm{GCRL}(\pl{s}, \pl{a}, \phi(\pl{g}))$)
with an existing offline GCRL algorithm to perform downstream goal-conditioned policy learning.
Note that the choice of this downstream GCRL algorithm is orthogonal to
the one used to approximate the temporal distance in the previous paragraph,
and thus our method can be applied to any general GCRL algorithm.
In this work, we apply dual goal representations to three different offline GCRL algorithms,
goal-conditioned IVL (GCIVL)~\citep{iql_kostrikov2022, ogbench_park2025},
contrastive RL (CRL)~\citep{crl_eysenbach2022},
and goal-conditioned flow BC (GCFBC)~\citep{sharsa_park2025}.
We describe the full goal-conditioned RL procedure in \Cref{alg:dgr},
where $\ell_\kappa^2$ denotes the expectile loss with expectile $\kappa$~\citep{iql_kostrikov2022}
and $\bar Q$ denotes the target Q network~\citep{dqn_mnih2013}.

\begin{algorithm}[t!]
\caption{Offline Goal-Conditioned RL with Dual Goal Representations}
\label{alg:dgr}
\begin{algorithmic}
\footnotesize

\State
\LComment{\color{myblue} Dual goal representation learning}
\State Initialize state representation $\psi(\pl{s})$, goal representation $\phi(\pl{g})$, Q function $Q(\pl{s}, \pl{a}, \pl{g})$
\While{not converged}
\State Sample batch $\{(s, a, s', g)^{(i)}\}_i$ from $\gD$
\BeginBox[fill=white]
\LComment{Train parameterized value function $f(\psi(\pl{s}), \phi(\pl{g}))$ with goal-conditioned IQL}
\State Train $\psi$, $\phi$ by minimizing $\E[\ell^2_\kappa(f(\psi(s), \phi(g)) - \bar Q(s, a, g))]$
\State Train $Q$ by minimizing $\E[(Q(s, a, g) - r(s, g) - \gamma f(\psi(s'), \phi(g)))^2]$
\State Update $\bar Q$ using exponential moving averaging
\EndBox
\EndWhile

\vspace{5pt}

\LComment{\color{myblue} Downstream offline GCRL with dual goal representation (can be run in parallel with above)}
\State Initialize policy $\pi(\pl{a} \mid \pl{s}, \phi(\pl{g}))$
\State (If necessary) initialize representation-conditioned value functions $V^\mathrm{GCRL}(\pl{s}, \phi(\pl{g}))$, $Q^\mathrm{GCRL}(\pl{s}, \pl{a}, \phi(\pl{g}))$
\While{not converged}
\State Train $\pi$, $V^\mathrm{GCRL}$, $Q^\mathrm{GCRL}$ with any offline GCRL algorithm (\eg, GCBC, GCIVL, CRL)
\EndWhile
\State
\Return $\pi(\pl{a} \mid \pl{s}, \phi(\pl{g}))$

\end{algorithmic}
\end{algorithm}

\cutsectionup
\section{Related Work}
\cutsectiondown
\label{sec:related}

\textbf{Offline goal-conditioned RL.}
At the intersection of offline RL~\citep{batch_lange2012, offline_levine2020} and goal-conditioned RL~\citep{gcrl_kaelbling1993},
offline goal-conditioned RL aims to train a goal-reaching policy from an unlabeled (\ie, reward-free) dataset.
Offline GCRL provides a principled way to pre-train policies, representations, and value functions in a self-supervised manner,
which can later be adapted to downstream tasks~\citep{vip_ma2023, icvf_ghosh2023, u2o_kim2024, ogbench_park2025}.
Prior works in offline goal-conditioned RL have proposed diverse approaches based on 
implicit value learning~\citep{hiql_park2023},
contrastive learning~\citep{crl_eysenbach2022, tdinfonce_zheng2024},
metric learning~\citep{qrl_wang2023, cmd_myers2024},
and planning~\citep{sptm_savinov2018, sorb_eysenbach2019, goplan_wang2024}.
In this work, we aim to improve the performance and generalizability of goal-reaching agents
by learning an effective goal representation (as with prior goal representation learning methods discussed below),
which can be orthogonally combined with any existing offline GCRL algorithm.

\textbf{Representation learning for RL.}
Representation learning has long been studied in RL~\citep{rep_echchahed2025}.
By learning structured representations for states, actions, or other components in the given environment,
prior work aims to facilitate learning and improve the generalization of RL agents.
To this end, a variety of representation learning techniques~\citep{bisim_li2006, bisim_castro2019, lap_wu2019, curl_laskin2020, spr_schwarzer2021} have been proposed,
and we refer to \citet{rep_echchahed2025} for a comprehensive literature review.
Among representation learning algorithms,
our practical method has a resemblance to
the successor representations and successor features~\citep{sr_dayan1993, sf_barreto2017, fb_touati2021, zs_touati2023}.
Unlike these works, we approximate \emph{optimal} temporal distances (or equivalently \emph{optimal} goal-conditioned values),
which do not correspond to any successor features or representations with respect to a fixed policy.
This enables us to directly learn a representation for optimal goal-reaching.

\textbf{Representation learning for GCRL.}
Our work is most closely related to previous studies that propose representation learning techniques
based on goal-conditioned RL~\citep{tcn_sermanet2018, crl_eysenbach2022, rep_steccanella2022, vip_ma2023, hilp_park2024, tra_myers2025, byolgamma_lawson2025}.
While the concept of dual goal representations and our theoretical results (\Cref{sec:dgr}) are, to our knowledge, novel,
our practical learning algorithm for dual goal representations (\Cref{sec:practical}) bears similarity to several existing algorithms.
Some previous works train temporal distance representations $\phi$
using the metric parameterization $\|\phi(\pl{s}) - \phi(\pl{g})\|_2$
with different metric learning algorithms~\citep{tcn_sermanet2018, rep_steccanella2022, vip_ma2023, hilp_park2024},
which can potentially be viewed as variants of dual goal representations
with a different aggregation function.
Our method differs from these works in two ways.
First, we use $\phi$ for \emph{goal representations} (as our theory in \Cref{sec:dgr} suggests),
whereas these works use $\phi$ for
metric-based skill learning~\citep{hilp_park2024},
state representations~\citep{vip_ma2023},
reward shaping~\citep{tcn_sermanet2018, rep_steccanella2022, vip_ma2023},
and planning~\citep{tcn_sermanet2018, rep_steccanella2022, vip_ma2023, hilp_park2024}.
Second, unlike these works, we employ the more general inner product parameterization (instead of the metric one),
which we show leads to better performance likely due to its universality.
Similar to our work,
TRA~\citep{tra_myers2025} employs the inner product parameterization to learn a goal representation.
However, this method trains the representation with \emph{behavioral} temporal contrastive learning,
which does not aim to approximate $d^*$;
in contrast, we train a representation based on the \emph{optimal} value function (as our theory suggests),
and we empirically show that our method leads to substantially better performance in \Cref{sec:exp}.

\cutsectionup
\section{Experiments}
\cutsectiondown
\label{sec:exp}

In this section, we empirically verify the effectiveness of dual goal representations
through a series of experiments.
In \Cref{sec:exp_pure}, we evaluate the performance of ``ideal'' dual representations (\Cref{sec:dgr})
on discrete-space tasks as a proof of concept.
Next, we evaluate our practical goal representation learning algorithm (\Cref{sec:practical})
on a standard offline goal-conditioned RL benchmark~\citep{ogbench_park2025},
comparing with various previous representation learning approaches.
Then, we provide ablation studies of our method in \Cref{sec:exp_abl} to understand the importance of each component.

In our experiments,
we use $8$ random seeds for the main result table (\Cref{table:state_gcivl})
and $4$ seeds for other tables and plots, unless otherwise stated.
We report standard deviations in tables
and $95\%$ confidence intervals in plots.
In tables, we highlight numbers that are at and above $95\%$ of the best performance, following \citet{ogbench_park2025}.

\cutsubsectionup
\subsection{Results with ``Ideal'' Dual Representations}
\cutsubsectiondown
\label{sec:exp_pure}

\begin{wrapfigure}{r}{0.4\textwidth}
    \centering
    \raisebox{0pt}[\dimexpr\height-1.0\baselineskip\relax]{
        \begin{subfigure}[t]{1.0\linewidth}
        \includegraphics[width=\linewidth]{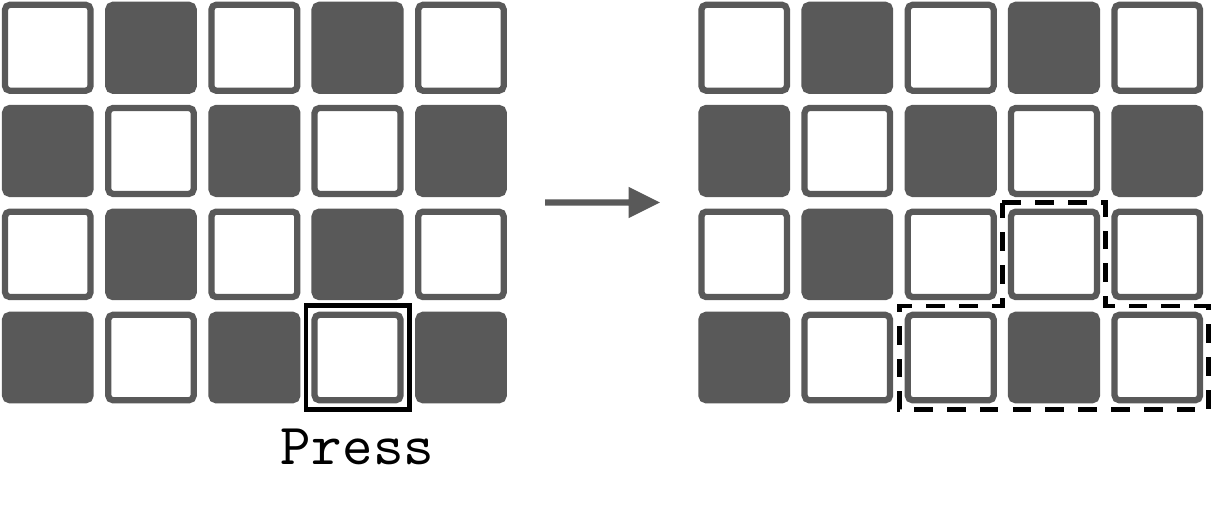}
        \end{subfigure}
    }
    \vspace{-17pt}
    \caption{
    \footnotesize
    \textbf{The ``Lights Out'' puzzle.}
    }
    \vspace{-5pt}
    \label{fig:toy_env}
\end{wrapfigure}
We begin our experiments by demonstrating how
our original concept of dual representations
(\ie, the ``ideal'' dual representation described in \Cref{sec:dgr}, not the approximated one in \Cref{sec:practical})
can potentially improve goal-reaching capabilities.
To this end,
we consider a discrete puzzle environment where we can analytically pre-compute the temporal distances $d^*$.
Specifically, we employ the discrete ``Lights Out'' puzzle~\citep{ogbench_park2025}.
The aim of this puzzle is to reach a given goal configuration from an initial state,
where pressing a button toggles the colors of that button and the adjacent ones (\Cref{fig:toy_env}).
The dataset is collected by a uniform random policy,
and we evaluate agents with five pre-defined state-goal pairs on two different puzzle sizes,
\tt{4x5} and \tt{4x6}~\citep{ogbench_park2025}.\footnote{Note that these puzzle tasks are tabular variants of the ones in OGBench~\citep{ogbench_park2025}.}

\begin{wrapfigure}{r}{0.45\textwidth}
    \centering
    \vspace{9pt}
    \raisebox{0pt}[\dimexpr\height-1.0\baselineskip\relax]{
        \begin{subfigure}[t]{1.0\linewidth}
        \includegraphics[width=\linewidth]{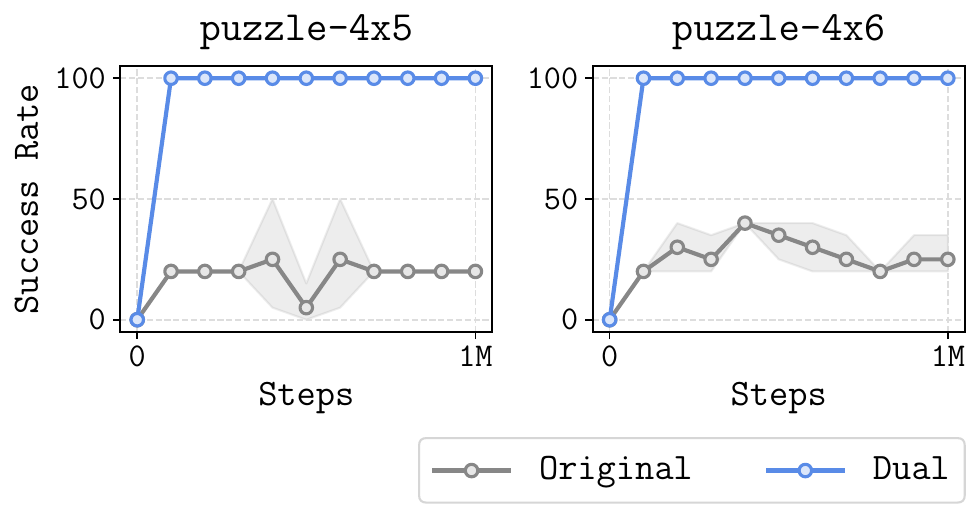}
        \end{subfigure}
    }
    \vspace{-14pt}
    \caption{
    \footnotesize
    \textbf{Original vs. dual representations.}
    }
    \vspace{-3pt}
    \label{fig:toy_puzzle}
\end{wrapfigure}
On these tasks, we compare the performance of the original and dual representations.
They are both trained with goal-conditioned DQN~\citep{dqn_mnih2013}.
For dual representations, to reduce dimensionality, we randomly sample $64$ states $\{s_i\}_{i=1}^{64}$
(\ie, $64$ random board configurations),
and use $[d^*(s_1, g), \ldots, d^*(s_{64}, g)]^\top$ as the representation of $g$.
In this experiment, we use the dual representation for both states and goals.
We present the comparison result in \Cref{fig:toy_puzzle}.
The result suggests that our dual representations substantially improve training speed and goal-reaching performance.
Note that even though the dual representation $\phi^\vee$ is \emph{bijective} in this case,
it provides structured (processed) information about the temporal dynamics,
which makes policy learning significantly easier.
This highlights the benefits of using a structured representation. 

{\textbf{Disclaimer:}}
In this experiment, our goal is to show how dual representations improve performance in the \emph{ideal} case
(as a proof of concept),
and we use privileged information about ground-truth temporal distances.
In the following section,
we demonstrate the performance of practical dual goal representations without such privileged information.

\cutsubsectionup
\subsection{Results on OGBench}
\cutsubsectiondown
\label{sec:exp_ogb}

Next, we evaluate our practical algorithm for dual goal representations (\Cref{alg:dgr})
on OGBench~\citep{ogbench_park2025}, a benchmark suite for offline goal-conditioned RL algorithms.

\textbf{Tasks and datasets.}
We employ $13$ state-based and $7$ pixel-based tasks from OGBench
across robotic navigation and manipulation (\Cref{fig:envs}).
In navigation tasks (\tt{\{point, ant, humanoid\}maze}),
the aim is to control a robot body
(from a single point to a complex humanoid robot with $21$ degrees of freedom)
to reach a desired goal location in a maze.
In \tt{antsoccer},
the agent must control a quadrupedal robot to dribble a soccer ball to a desired location.
In manipulation tasks (\tt{cube}, \tt{scene}, \tt{puzzle}),
the agent must control a robotic arm
to manipulate cubes, re-arrange everyday objects, or solve a combinatorial puzzle.
In pixel-based environments,
the agent must perform robotic manipulation solely from $64 \times 64 \times 3$-sized image observations.
We use the standard \tt{navigate} or \tt{play} datasets for these tasks,
where the datasets are collected by task-agnostic ``play''-style~\citep{play_lynch2019} demonstrations
that randomly perform diverse atomic behaviors
(\eg, continuously reaching random locations or performing random button presses).

\textbf{Previous goal representation learning algorithms.}
We compare dual goal representations with five previous goal representation learning methods:
Original, VIB~\citep{recon_shah2021, hiql_park2023}, VIP~\citep{vip_ma2023}, TRA~\citep{tra_myers2025}, and BYOL-$\gamma$~\citep{byolgamma_lawson2025}.
Original (``Orig'') is a baseline that does not use any representations (\ie, $\phi(g) = g$).
VIB~\citep{recon_shah2021, hiql_park2023} trains goal representations
via a variational information bottleneck (VIB) within a goal-conditioned value function or policy.
VIP~\citep{vip_ma2023} trains metric-based goal representations
with a goal-conditioned value learning algorithm.
TRA~\citep{tra_myers2025} trains contrastive goal representations
with behavioral contrastive learning~\citep{crl_eysenbach2022}.
BYOL-$\gamma$~\citep{byolgamma_lawson2025} trains self-predictive goal representations with temporal self-supervised learning.
We refer to \Cref{sec:impl_details} for the full implementation details about these methods.

\textbf{Downstream offline GCRL algorithms.}
We measure the effectiveness of goal representations
by training a parameterized goal-conditioned policy $\pi(\pl{a} \mid \pl{s}, \phi(\pl{g}))$
with downstream offline GCRL algorithms, as described in \Cref{alg:dgr}.
To extensively assess the performance of goal representations,
we consider three downstream algorithms across diverse categories:
goal-conditioned implicit V-learning (GCIVL)~\citep{iql_kostrikov2022, ogbench_park2025},
contrastive RL (CRL)~\citep{crl_eysenbach2022},
and goal-conditioned flow behavioral cloning (GCFBC)~\citep{sharsa_park2025}.
We refer to \Cref{sec:impl_details} for full details.

\begin{table}[t!]
\caption{
\footnotesize
\textbf{Results on state-based tasks with GCIVL as the downstream algorithm.}
Our dual goal representation mostly achieves the best performance among goal representation learning methods.
}
\label{table:state_gcivl}
\centering
\scalebox{0.9}
{
\setlength{\myboxwidth}{2.7pt}

\begin{tabularew}{l*{6}{>{\spew{.5}{+1}}r@{\,}l}}
\toprule
\multicolumn{1}{l}{\tt{Environment}} & \multicolumn{2}{c}{\makebox[\myboxwidth]{} \tt{Orig} \makebox[\myboxwidth]{}} & \multicolumn{2}{c}{\makebox[\myboxwidth]{} \tt{VIB} \makebox[\myboxwidth]{}} & \multicolumn{2}{c}{\makebox[\myboxwidth]{} \tt{VIP} \makebox[\myboxwidth]{}} & \multicolumn{2}{c}{\makebox[\myboxwidth]{} \tt{TRA} \makebox[\myboxwidth]{}} & \multicolumn{2}{c}{\makebox[\myboxwidth]{} \tt{BYOL-$\gamma$} \makebox[\myboxwidth]{}} & \multicolumn{2}{c}{\makebox[\myboxwidth]{} \tt{\color{myblue}Dual} \makebox[\myboxwidth]{}} \\
\midrule
\tt{pointmaze-medium-navigate-v0} & \tt{\color{myblue}{78}} &{\tiny $\pm$\tt{8}} & \tt{69} &{\tiny $\pm$\tt{13}} & \tt{0} &{\tiny $\pm$\tt{1}} & \tt{3} &{\tiny $\pm$\tt{6}} & \tt{37} &{\tiny $\pm$\tt{7}} & \tt{\color{myblue}{76}} &{\tiny $\pm$\tt{7}} \\
\tt{pointmaze-large-navigate-v0} & \tt{\color{myblue}{52}} &{\tiny $\pm$\tt{6}} & \tt{\color{myblue}{50}} &{\tiny $\pm$\tt{7}} & \tt{0} &{\tiny $\pm$\tt{0}} & \tt{1} &{\tiny $\pm$\tt{2}} & \tt{22} &{\tiny $\pm$\tt{12}} & \tt{46} &{\tiny $\pm$\tt{6}} \\
\tt{antmaze-medium-navigate-v0} & \tt{71} &{\tiny $\pm$\tt{4}} & \tt{68} &{\tiny $\pm$\tt{4}} & \tt{31} &{\tiny $\pm$\tt{5}} & \tt{22} &{\tiny $\pm$\tt{15}} & \tt{39} &{\tiny $\pm$\tt{5}} & \tt{\color{myblue}{75}} &{\tiny $\pm$\tt{4}} \\
\tt{antmaze-large-navigate-v0} & \tt{16} &{\tiny $\pm$\tt{3}} & \tt{9} &{\tiny $\pm$\tt{3}} & \tt{9} &{\tiny $\pm$\tt{2}} & \tt{22} &{\tiny $\pm$\tt{12}} & \tt{11} &{\tiny $\pm$\tt{5}} & \tt{\color{myblue}{28}} &{\tiny $\pm$\tt{11}} \\
\tt{antmaze-giant-navigate-v0} & \tt{\color{myblue}{0}} &{\tiny $\pm$\tt{0}} & \tt{\color{myblue}{0}} &{\tiny $\pm$\tt{0}} & \tt{\color{myblue}{0}} &{\tiny $\pm$\tt{0}} & \tt{\color{myblue}{0}} &{\tiny $\pm$\tt{0}} & \tt{\color{myblue}{0}} &{\tiny $\pm$\tt{0}} & \tt{\color{myblue}{0}} &{\tiny $\pm$\tt{0}} \\
\tt{humanoidmaze-medium-navigate-v0} & \tt{27} &{\tiny $\pm$\tt{3}} & \tt{24} &{\tiny $\pm$\tt{2}} & \tt{7} &{\tiny $\pm$\tt{3}} & \tt{21} &{\tiny $\pm$\tt{3}} & \tt{18} &{\tiny $\pm$\tt{5}} & \tt{\color{myblue}{29}} &{\tiny $\pm$\tt{3}} \\
\tt{humanoidmaze-large-navigate-v0} & \tt{\color{myblue}{3}} &{\tiny $\pm$\tt{0}} & \tt{\color{myblue}{3}} &{\tiny $\pm$\tt{1}} & \tt{1} &{\tiny $\pm$\tt{0}} & \tt{2} &{\tiny $\pm$\tt{1}} & \tt{2} &{\tiny $\pm$\tt{1}} & \tt{\color{myblue}{3}} &{\tiny $\pm$\tt{2}} \\
\tt{antsoccer-arena-navigate-v0} & \tt{\color{myblue}{47}} &{\tiny $\pm$\tt{4}} & \tt{34} &{\tiny $\pm$\tt{4}} & \tt{2} &{\tiny $\pm$\tt{1}} & \tt{8} &{\tiny $\pm$\tt{2}} & \tt{11} &{\tiny $\pm$\tt{4}} & \tt{31} &{\tiny $\pm$\tt{3}} \\
\tt{cube-single-play-v0} & \tt{52} &{\tiny $\pm$\tt{3}} & \tt{\color{myblue}{90}} &{\tiny $\pm$\tt{3}} & \tt{40} &{\tiny $\pm$\tt{7}} & \tt{40} &{\tiny $\pm$\tt{5}} & \tt{51} &{\tiny $\pm$\tt{11}} & \tt{\color{myblue}{89}} &{\tiny $\pm$\tt{3}} \\
\tt{cube-double-play-v0} & \tt{35} &{\tiny $\pm$\tt{5}} & \tt{33} &{\tiny $\pm$\tt{3}} & \tt{3} &{\tiny $\pm$\tt{2}} & \tt{7} &{\tiny $\pm$\tt{2}} & \tt{6} &{\tiny $\pm$\tt{4}} & \tt{\color{myblue}{60}} &{\tiny $\pm$\tt{4}} \\
\tt{scene-play-v0} & \tt{46} &{\tiny $\pm$\tt{3}} & \tt{58} &{\tiny $\pm$\tt{1}} & \tt{23} &{\tiny $\pm$\tt{6}} & \tt{46} &{\tiny $\pm$\tt{6}} & \tt{44} &{\tiny $\pm$\tt{9}} & \tt{\color{myblue}{72}} &{\tiny $\pm$\tt{6}} \\
\tt{puzzle-3x3-play-v0} & \tt{5} &{\tiny $\pm$\tt{1}} & \tt{\color{myblue}{14}} &{\tiny $\pm$\tt{3}} & \tt{3} &{\tiny $\pm$\tt{1}} & \tt{5} &{\tiny $\pm$\tt{1}} & \tt{0} &{\tiny $\pm$\tt{0}} & \tt{5} &{\tiny $\pm$\tt{1}} \\
\tt{puzzle-4x4-play-v0} & \tt{14} &{\tiny $\pm$\tt{1}} & \tt{6} &{\tiny $\pm$\tt{3}} & \tt{1} &{\tiny $\pm$\tt{1}} & \tt{10} &{\tiny $\pm$\tt{3}} & \tt{1} &{\tiny $\pm$\tt{2}} & \tt{\color{myblue}{23}} &{\tiny $\pm$\tt{3}} \\
\midrule
\tt{Average} & \tt{34} &{\tiny $\pm$\tt{1}} & \tt{35} &{\tiny $\pm$\tt{2}} & \tt{9} &{\tiny $\pm$\tt{1}} & \tt{15} &{\tiny $\pm$\tt{2}} & \tt{19} &{\tiny $\pm$\tt{2}} & \tt{\color{myblue}{41}} &{\tiny $\pm$\tt{2}} \\
\bottomrule
\end{tabularew}

\vspace{-20pt}

}
\end{table}

\cutsubsectionup
\subsubsection{Results on State-Based Tasks}
\cutsubsectiondown
\label{sec:exp_state}
We report the performance of six goal representation learning methods
combined with three downstream GCRL algorithms ($18$ combinations in total)
on $13$ state-based OGBench environments
in \Cref{table:state_gcivl} above and \Cref{table:state_crl,table:state_gcfbc} in \Cref{sec:add_results}.
These results suggest that
our dual goal representations achieve the best average performance in all three settings,
outperforming both the previous representation learning methods and the original representations
on most tasks.
Notably, our method performs particularly well on \tt{cube} tasks,
in some cases improving performance by $3\times$ compared to the ``original'' baseline (\Cref{table:state_crl}).
Moreover, unlike previous algorithms such as VIB, VIP, and TRA,
our method shows the least sensitivity to the choice of downstream offline GCRL algorithms.

\begin{table}[t!]
\caption{
\footnotesize
\textbf{Results on pixel-based tasks.}
Dual goal representations achieve the best performance on five out of seven pixel-based tasks.
All goal representation learning methods struggle with the \tt{puzzle} tasks,
likely due to the use of ``late fusion'' (see \Cref{sec:exp_pixel}).
}
\label{table:pixel}
\centering
\scalebox{0.9}
{

\setlength{\myboxwidth}{2.1pt}

\begin{tabularew}{l*{6}{>{\spew{.5}{+1}}r@{\,}l}}
\toprule
\multicolumn{1}{l}{\tt{Environment}} & \multicolumn{2}{c}{\makebox[\myboxwidth]{} \tt{Orig} \makebox[\myboxwidth]{}} & \multicolumn{2}{c}{\makebox[\myboxwidth]{} \tt{VIB} \makebox[\myboxwidth]{}} & \multicolumn{2}{c}{\makebox[\myboxwidth]{} \tt{VIP} \makebox[\myboxwidth]{}} & \multicolumn{2}{c}{\makebox[\myboxwidth]{} \tt{TRA} \makebox[\myboxwidth]{}} & \multicolumn{2}{c}{\makebox[\myboxwidth]{} \tt{BYOL-$\gamma$} \makebox[\myboxwidth]{}} & \multicolumn{2}{c}{\makebox[\myboxwidth]{} \tt{\color{myblue}Dual} \makebox[\myboxwidth]{}} \\
\midrule
\tt{visual-antmaze-medium-navigate-v0} & \tt{66} &{\tiny $\pm$\tt{4}} & \tt{18} &{\tiny $\pm$\tt{9}} & \tt{30} &{\tiny $\pm$\tt{7}} & \tt{48} &{\tiny $\pm$\tt{4}} & \tt{32} &{\tiny $\pm$\tt{5}} & \tt{\color{myblue}{78}} &{\tiny $\pm$\tt{4}} \\
\tt{visual-antmaze-large-navigate-v0} & \tt{26} &{\tiny $\pm$\tt{5}} & \tt{5} &{\tiny $\pm$\tt{2}} & \tt{9} &{\tiny $\pm$\tt{1}} & \tt{13} &{\tiny $\pm$\tt{3}} & \tt{9} &{\tiny $\pm$\tt{4}} & \tt{\color{myblue}{40}} &{\tiny $\pm$\tt{4}} \\
\tt{visual-cube-single-play-v0} & \tt{53} &{\tiny $\pm$\tt{4}} & \tt{18} &{\tiny $\pm$\tt{19}} & \tt{39} &{\tiny $\pm$\tt{6}} & \tt{31} &{\tiny $\pm$\tt{24}} & \tt{35} &{\tiny $\pm$\tt{8}} & \tt{\color{myblue}{58}} &{\tiny $\pm$\tt{5}} \\
\tt{visual-cube-double-play-v0} & \tt{\color{myblue}{9}} &{\tiny $\pm$\tt{2}} & \tt{0} &{\tiny $\pm$\tt{0}} & \tt{0} &{\tiny $\pm$\tt{0}} & \tt{3} &{\tiny $\pm$\tt{2}} & \tt{2} &{\tiny $\pm$\tt{1}} & \tt{\color{myblue}{9}} &{\tiny $\pm$\tt{2}} \\
\tt{visual-scene-play-v0} & \tt{\color{myblue}{25}} &{\tiny $\pm$\tt{2}} & \tt{6} &{\tiny $\pm$\tt{3}} & \tt{4} &{\tiny $\pm$\tt{1}} & \tt{15} &{\tiny $\pm$\tt{6}} & \tt{10} &{\tiny $\pm$\tt{8}} & \tt{\color{myblue}{26}} &{\tiny $\pm$\tt{5}} \\
\tt{visual-puzzle-3x3-play-v0} & \tt{\color{myblue}{22}} &{\tiny $\pm$\tt{2}} & \tt{0} &{\tiny $\pm$\tt{0}} & \tt{0} &{\tiny $\pm$\tt{0}} & \tt{0} &{\tiny $\pm$\tt{0}} & \tt{0} &{\tiny $\pm$\tt{0}} & \tt{0} &{\tiny $\pm$\tt{0}} \\
\tt{visual-puzzle-4x4-play-v0} & \tt{\color{myblue}{65}} &{\tiny $\pm$\tt{4}} & \tt{0} &{\tiny $\pm$\tt{0}} & \tt{0} &{\tiny $\pm$\tt{0}} & \tt{0} &{\tiny $\pm$\tt{0}} & \tt{0} &{\tiny $\pm$\tt{0}} & \tt{0} &{\tiny $\pm$\tt{0}} \\
\bottomrule
\end{tabularew}

}
\end{table}

\cutsubsectionup
\subsubsection{Results on Pixel-Based Tasks}
\cutsubsectiondown
\label{sec:exp_pixel}
Next, we present the comparison results on pixel-based tasks in \Cref{table:pixel}.
We use GCIVL as the base offline GCRL algorithm,
given its strongest average performance in the state-based case.
As in \Cref{sec:exp_state},
the results suggest that
our dual goal representation outperforms the other five alternative representations
(or at least performs equally well)
on five out of seven tasks.

\textbf{Negative results.}
However, we found that no representation learning methods (including ours),
achieve non-zero performance on \tt{visual-puzzle} tasks (\Cref{table:pixel}).
Given their reasonable performance on the equivalent state-based \tt{puzzle} tasks (\Cref{table:state_gcivl}),
this failure is likely related to handling visual observations.

\begin{wraptable}{r}{0.42\textwidth}
    \centering
    \vspace{-12pt}
    \caption{
    \footnotesize
    \textbf{Early vs. late fusion.}
    }
    \vspace{-5pt}
    \label{table:fusion}
    \raisebox{0pt}[\dimexpr\height-1.0\baselineskip\relax]{
    \scalebox{0.9}{
        \begin{tabularew}{l*{2}{>{\spew{.5}{+1}}r@{\,}l}}
        \toprule
        \multicolumn{1}{l}{\tt{Environment}} & \multicolumn{2}{c}{\tt{Early}} & \multicolumn{2}{c}{\tt{Late}} \\
        \midrule
        \tt{visual-puzzle-3x3} & \tt{22} &{\tiny $\pm$\tt{2}} & \tt{0} &{\tiny $\pm$\tt{0}} \\
        \tt{visual-puzzle-4x4} & \tt{65} &{\tiny $\pm$\tt{4}} & \tt{0} &{\tiny $\pm$\tt{0}} \\
        \bottomrule
        \end{tabularew}
    }
    }
    \vspace{-2pt}
\end{wraptable}
We hypothesize that the failure stems from differences between early and late fusion
of visual state and goal observations.
Typically, a vanilla goal-conditioned policy $\pi(\pl{a} \mid \pl{s}, \pl{g})$
first concatenates $s$ and $g$ along the depth dimension (forming a $64 \times 64 \times (3 + 3)$-sized image)
and passes it to a convolutional neural network (this is often called ``early fusion'').
However, when using a representation-conditioned policy $\pi(\pl{a} \mid \pl{s}, \phi(\pl{g}))$,
we cannot directly use early fusion as $g$ needs to be processed separately.
The problem is that, early fusion generally leads to better performance in visual robotics tasks~\citep{early_walsman2019, efvla_huang2024}.
To verify this, we compare the performance of \emph{vanilla} GCIVL with early and late fusion.
\Cref{table:fusion} shows the results on \tt{puzzle-\{3x3, 4x4\}}, suggesting that the use of late fusion
is indeed likely the cause of the poor performance of representation learning-based approaches on these tasks.
Given this evidence,
we believe this failure may be addressed
by modifying the representation function to be aware of the current state as well,
which we leave for future work.

\cutsubsectionup
\subsection{Q\&As}
\cutsubsectiondown
\label{sec:exp_abl}

In this section, we present additional discussions and ablations through the following Q\&As.

\ul{\textbf{Q: Are dual goal representations indeed robust to noise?}}

\begin{wrapfigure}{r}{0.6\textwidth}
    \centering
    \vspace{-2pt}
    \raisebox{0pt}[\dimexpr\height-1.0\baselineskip\relax]{
        \begin{subfigure}[t]{1.0\linewidth}
        \includegraphics[width=\linewidth]{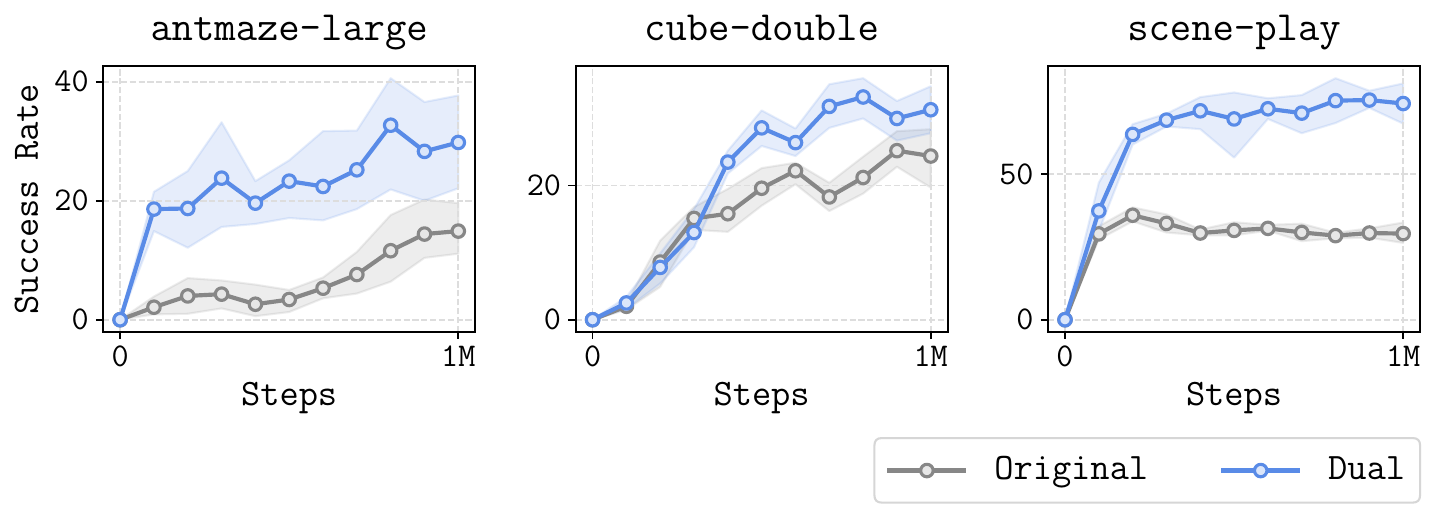}
        \end{subfigure}
    }
    \vspace{-15pt}
    \caption{
    \footnotesize
    \textbf{Dual representations are robust to noise.}
    }
    \vspace{-5pt}
    \label{fig:noisy}
\end{wrapfigure}
\textbf{A:}
Our theory (\Cref{thm:min}) suggests that dual goal representations can, in principle, be invariant to exogenous noise.
To empirically verify this property in practice,
we evaluate the original and dual goal representations
in three ``noisy'' environments (with GCIVL as the downstream GCRL algorithm),
where we add Gaussian noise to the evaluation goals.
This assesses the robustness of the two representations to out-of-distribution goals at test time.
\Cref{fig:noisy} presents the comparison results.
The results tell us that our dual representation indeed exhibits better robustness than the original one,
as suggested by the theory.

\ul{\textbf{Q: Can we use other aggregation (parameterization) functions for dual representations?}}

\begin{table}[h!]
\caption{
\footnotesize
\textbf{Dual representations with different parameterizations.}
}
\label{table:param}
\centering
\scalebox{0.9}
{

\setlength{\myboxwidth}{8.5pt}

\begin{tabularew}{l*{2}{>{\spew{.5}{+1}}r@{\,}l}}
\toprule
\multicolumn{1}{l}{\tt{Environment}} & \multicolumn{2}{c}{\makebox[\myboxwidth]{} \tt{Dual (Symmetric)} \makebox[\myboxwidth]{}} & \multicolumn{2}{c}{\makebox[\myboxwidth]{} \tt{\color{myblue}Dual (Inner Product)} \makebox[\myboxwidth]{}} \\
\midrule
\tt{pointmaze-medium-navigate-v0} & \tt{\color{myblue}{73}} &{\tiny $\pm$\tt{13}} & \tt{\color{myblue}{76}} &{\tiny $\pm$\tt{3}} \\
\tt{pointmaze-large-navigate-v0} & \tt{43} &{\tiny $\pm$\tt{11}} & \tt{\color{myblue}{46}} &{\tiny $\pm$\tt{8}} \\
\tt{antmaze-medium-navigate-v0} & \tt{\color{myblue}{77}} &{\tiny $\pm$\tt{7}} & \tt{\color{myblue}{75}} &{\tiny $\pm$\tt{5}} \\
\tt{antmaze-large-navigate-v0} & \tt{23} &{\tiny $\pm$\tt{1}} & \tt{\color{myblue}{35}} &{\tiny $\pm$\tt{7}} \\
\tt{antmaze-giant-navigate-v0} & \tt{0} &{\tiny $\pm$\tt{0}} & \tt{\color{myblue}{1}} &{\tiny $\pm$\tt{0}} \\
\tt{humanoidmaze-medium-navigate-v0} & \tt{23} &{\tiny $\pm$\tt{2}} & \tt{\color{myblue}{30}} &{\tiny $\pm$\tt{4}} \\
\tt{humanoidmaze-large-navigate-v0} & \tt{\color{myblue}{3}} &{\tiny $\pm$\tt{1}} & \tt{\color{myblue}{3}} &{\tiny $\pm$\tt{2}} \\
\tt{antsoccer-arena-navigate-v0} & \tt{\color{myblue}{30}} &{\tiny $\pm$\tt{3}} & \tt{\color{myblue}{31}} &{\tiny $\pm$\tt{3}} \\
\tt{cube-single-play-v0} & \tt{66} &{\tiny $\pm$\tt{4}} & \tt{\color{myblue}{90}} &{\tiny $\pm$\tt{4}} \\
\tt{cube-double-play-v0} & \tt{48} &{\tiny $\pm$\tt{6}} & \tt{\color{myblue}{61}} &{\tiny $\pm$\tt{4}} \\
\tt{scene-play-v0} & \tt{\color{myblue}{82}} &{\tiny $\pm$\tt{8}} & \tt{71} &{\tiny $\pm$\tt{6}} \\
\tt{puzzle-3x3-play-v0} & \tt{\color{myblue}{5}} &{\tiny $\pm$\tt{1}} & \tt{\color{myblue}{5}} &{\tiny $\pm$\tt{0}} \\
\tt{puzzle-4x4-play-v0} & \tt{\color{myblue}{24}} &{\tiny $\pm$\tt{1}} & \tt{\color{myblue}{23}} &{\tiny $\pm$\tt{3}} \\
\midrule
\tt{Average} & \tt{38} &{\tiny $\pm$\tt{1}} & \tt{\color{myblue}{42}} &{\tiny $\pm$\tt{1}} \\
\bottomrule
\end{tabularew}

}
\vspace{5pt}
\end{table}

\textbf{A:}
In \Cref{sec:practical},
we approximate a functional using the inner product aggregation function,
$f(\psi(s), \phi(g)) = \psi(s)^\top \phi(g)$.
However, this is not the only possible option.
For example, we can use other parameterizations,
such as the symmetric norm (metric) parameterization, $f(\psi(s), \phi(g)) = \|\phi(s) - \phi(g)\|_2$ with $\psi = \phi$.
In particular, if we employ this metric parameterization,
our dual representations become similar to previous temporal metric embedding methods~\citep{tcn_sermanet2018, vip_ma2023, hilp_park2024}.
To understand how different aggregation functions affect performance,
we evaluate dual goal representations with the two parameterizations above (with GCIVL as the downstream GCRL algorithm),
and present the results in \Cref{table:param}.
The results show that our inner product representation generally leads to better performance
than the metric one when used as a goal representation.
We believe this is likely due to the universality of the inner product function (\Cref{sec:practical}),
whereas the metric parameterization is provably not universal and thus potentially less expressive in representing goals
(though this parameterization can still be useful for other purposes; see \citet{vip_ma2023, hilp_park2024}).

\clearpage

\ul{\textbf{Q: Can't we directly extract a policy from the temporal distance function $\bm{f(\psi(\pl{s}), \phi(\pl{g}))}$?}}

\begin{wraptable}{r}{0.52\textwidth}
    \centering
    \vspace{-13pt}
    \caption{
    \footnotesize
    \textbf{Directly extracting a policy from the temporal distance function often degrades performance.}
    In this experiment, we use GCIVL for both $f$ and $V^\mathrm{GCRL}$ to enable an apples-to-apples comparison.
    }
    \vspace{0pt}
    \label{table:direct}
    \raisebox{0pt}[\dimexpr\height-1.0\baselineskip\relax]{
    \scalebox{0.9}{
    
\begin{tabularew}{l*{2}{>{\spew{.5}{+1}}r@{\,}l}}
\toprule
\multicolumn{1}{l}{\tt{Environment}} & \multicolumn{2}{c}{\tt{Dual (Direct)}} & \multicolumn{2}{c}{\tt{\color{myblue}Dual}} \\
\midrule
\tt{antmaze-large} & \tt{\color{myblue}{58}} &{\tiny $\pm$\tt{6}} & \tt{28} &{\tiny $\pm$\tt{2}} \\
\tt{humanoidmaze-medium} & \tt{17} &{\tiny $\pm$\tt{3}} & \tt{\color{myblue}{28}} &{\tiny $\pm$\tt{2}} \\
\tt{antsoccer-arena} & \tt{16} &{\tiny $\pm$\tt{4}} & \tt{\color{myblue}{21}} &{\tiny $\pm$\tt{2}} \\
\tt{cube-double} & \tt{42} &{\tiny $\pm$\tt{4}} & \tt{\color{myblue}{49}} &{\tiny $\pm$\tt{3}} \\
\tt{scene-play} & \tt{50} &{\tiny $\pm$\tt{4}} & \tt{\color{myblue}{71}} &{\tiny $\pm$\tt{6}} \\
\tt{puzzle-4x4} & \tt{13} &{\tiny $\pm$\tt{1}} & \tt{\color{myblue}{24}} &{\tiny $\pm$\tt{3}} \\
\bottomrule
\end{tabularew}

    }
    }
\end{wraptable}
\textbf{A:}
In our offline GCRL recipe (\Cref{alg:dgr}), we train \emph{two} goal-conditioned value functions:
$f(\psi(\pl{s}), \phi(\pl{g}))$ for approximating temporal distances to learn a dual representation,
and $V^\mathrm{GCRL}(\pl{s}, \phi(\pl{g}))$ for performing downstream GCRL.
One might wonder whether we can directly extract a goal-conditioned policy from the former function $f$,
without having a separate downstream GCRL loop.
This is indeed possible (``Dual (Direct)'' in \Cref{table:direct}),
but it generally leads to worse performance (except for \tt{antmaze-large}).
The reason is that $f$ has a structured inner product parameterization
and is relatively less expressive (although it is theoretically universal)
to extract a goal-conditioned policy
than the \emph{monolithic} $V^\mathrm{GCRL}(\pl{s}, \phi(\pl{g}))$ function, which has no structural constraints.
In other words, the inner product parameterization is expressive enough to learn a good goal \emph{representation}
but may not be enough to extract a \emph{policy}
(\eg, in \tt{antmaze}, it is sufficient to focus on the $x$-$y$ position of the agent to learn a good goal representation,
but we need to know the full joint angles to train a control policy),
and thus we often benefit from a separate downstream GCRL algorithm with a different, more expressive value function.

\cutsectionup
\section{Closing Remarks}
\cutsectiondown

Originally, the main idea in this work was inspired by
a common slogan in mathematics: ``an object is uniquely determined by its relations with every other object.''
Building on top of this high-level idea,
we introduced the concept of a dual goal representation,
studied its theoretical properties (sufficiency and noise invariance),
proposed a practical goal representation learning recipe,
and demonstrated its empirical performance.
We hope that the ideas introduced in this work
inspire future research on state and goal representation learning
in reinforcement learning and sequential decision making.

\section*{Acknowledgment}
This work was partly supported by the Korea Foundation for Advanced Studies (KFAS), ONR N00014-22-1-2773, and AFOSR FA9550-22-1-0273.
This research used the Savio computational cluster resource provided by the Berkeley Research Computing program at UC Berkeley.

\bibliography{iclr2026_conference}

\begin{thebibliography}{56}
\providecommand{\natexlab}[1]{#1}
\providecommand{\url}[1]{\texttt{#1}}
\expandafter\ifx\csname urlstyle\endcsname\relax
  \providecommand{\doi}[1]{doi: #1}\else
  \providecommand{\doi}{doi: \begingroup \urlstyle{rm}\Url}\fi

\bibitem[Alemi et~al.(2017)Alemi, Fischer, Dillon, and Murphy]{vib_alemi2017}
Alexander~A Alemi, Ian Fischer, Joshua~V Dillon, and Kevin Murphy.
\newblock Deep variational information bottleneck.
\newblock In \emph{International Conference on Learning Representations
  (ICLR)}, 2017.

\bibitem[Ba et~al.(2016)Ba, Kiros, and Hinton]{ln_ba2016}
Jimmy Ba, Jamie~Ryan Kiros, and Geoffrey~E. Hinton.
\newblock Layer normalization.
\newblock \emph{ArXiv}, abs/1607.06450, 2016.

\bibitem[Barreto et~al.(2017)Barreto, Dabney, Munos, Hunt, Schaul, Van~Hasselt,
  and Silver]{sf_barreto2017}
Andr{\'e} Barreto, Will Dabney, R{\'e}mi Munos, Jonathan~J Hunt, Tom Schaul,
  Hado Van~Hasselt, and David Silver.
\newblock Successor features for transfer in reinforcement learning.
\newblock In \emph{Neural Information Processing Systems (NeurIPS)}, 2017.

\bibitem[Brown et~al.(2020)Brown, Mann, Ryder, Subbiah, Kaplan, Dhariwal,
  Neelakantan, Shyam, Sastry, Askell, Agarwal, Herbert-Voss, Krueger, Henighan,
  Child, Ramesh, Ziegler, Wu, Winter, Hesse, Chen, Sigler, Litwin, Gray, Chess,
  Clark, Berner, McCandlish, Radford, Sutskever, and Amodei]{gpt3_brown2020}
Tom~B. Brown, Benjamin Mann, Nick Ryder, Melanie Subbiah, Jared Kaplan,
  Prafulla Dhariwal, Arvind Neelakantan, Pranav Shyam, Girish Sastry, Amanda
  Askell, Sandhini Agarwal, Ariel Herbert-Voss, Gretchen Krueger, T.~J.
  Henighan, Rewon Child, Aditya Ramesh, Daniel~M. Ziegler, Jeff Wu, Clemens
  Winter, Christopher Hesse, Mark Chen, Eric Sigler, Mateusz Litwin, Scott
  Gray, Benjamin Chess, Jack Clark, Christopher Berner, Sam McCandlish, Alec
  Radford, Ilya Sutskever, and Dario Amodei.
\newblock Language models are few-shot learners.
\newblock In \emph{Neural Information Processing Systems (NeurIPS)}, 2020.

\bibitem[Castro(2019)]{bisim_castro2019}
Pablo~Samuel Castro.
\newblock Scalable methods for computing state similarity in deterministic
  markov decision processes.
\newblock In \emph{AAAI Conference on Artificial Intelligence (AAAI)}, 2019.

\bibitem[Dayan(1993)]{sr_dayan1993}
Peter Dayan.
\newblock Improving generalization for temporal difference learning: The
  successor representation.
\newblock \emph{Neural Computation}, 5:\penalty0 613--624, 1993.

\bibitem[Du et~al.(2019)Du, Krishnamurthy, Jiang, Agarwal, Dudik, and
  Langford]{bmdp_du2019}
Simon Du, Akshay Krishnamurthy, Nan Jiang, Alekh Agarwal, Miroslav Dudik, and
  John Langford.
\newblock Provably efficient rl with rich observations via latent state
  decoding.
\newblock In \emph{International Conference on Machine Learning (ICML)}, 2019.

\bibitem[Echchahed \& Castro(2025)Echchahed and Castro]{rep_echchahed2025}
Ayoub Echchahed and Pablo~Samuel Castro.
\newblock A survey of state representation learning for deep reinforcement
  learning.
\newblock \emph{Transactions on Machine Learning Research (TMLR)}, 2025.

\bibitem[Efroni et~al.(2022)Efroni, Misra, Krishnamurthy, Agarwal, and
  Langford]{ppe_efroni2022}
Yonathan Efroni, Dipendra Misra, Akshay Krishnamurthy, Alekh Agarwal, and John
  Langford.
\newblock Provably filtering exogenous distractors using multistep inverse
  dynamics.
\newblock In \emph{International Conference on Learning Representations
  (ICLR)}, 2022.

\bibitem[Espeholt et~al.(2018)Espeholt, Soyer, Munos, Simonyan, Mnih, Ward,
  Doron, Firoiu, Harley, Dunning, Legg, and Kavukcuoglu]{impala_espeholt2018}
Lasse Espeholt, Hubert Soyer, R{\'e}mi Munos, Karen Simonyan, Volodymyr Mnih,
  Tom Ward, Yotam Doron, Vlad Firoiu, Tim Harley, Iain Dunning, Shane Legg, and
  Koray Kavukcuoglu.
\newblock Impala: Scalable distributed deep-rl with importance weighted
  actor-learner architectures.
\newblock In \emph{International Conference on Machine Learning (ICML)}, 2018.

\bibitem[Eysenbach et~al.(2019)Eysenbach, Salakhutdinov, and
  Levine]{sorb_eysenbach2019}
Benjamin Eysenbach, Ruslan Salakhutdinov, and Sergey Levine.
\newblock Search on the replay buffer: Bridging planning and reinforcement
  learning.
\newblock In \emph{Neural Information Processing Systems (NeurIPS)}, 2019.

\bibitem[Eysenbach et~al.(2022)Eysenbach, Zhang, Salakhutdinov, and
  Levine]{crl_eysenbach2022}
Benjamin Eysenbach, Tianjun Zhang, Ruslan Salakhutdinov, and Sergey Levine.
\newblock Contrastive learning as goal-conditioned reinforcement learning.
\newblock In \emph{Neural Information Processing Systems (NeurIPS)}, 2022.

\bibitem[Folland(1999)]{real_folland1999}
Gerald~B Folland.
\newblock \emph{Real analysis: modern techniques and their applications}.
\newblock John Wiley \& Sons, 1999.

\bibitem[Ghosh et~al.(2023)Ghosh, Bhateja, and Levine]{icvf_ghosh2023}
Dibya Ghosh, Chethan Bhateja, and Sergey Levine.
\newblock Reinforcement learning from passive data via latent intentions.
\newblock In \emph{International Conference on Machine Learning (ICML)}, 2023.

\bibitem[Hendrycks \& Gimpel(2016)Hendrycks and Gimpel]{gelu_hendrycks2016}
Dan Hendrycks and Kevin Gimpel.
\newblock Gaussian error linear units (gelus).
\newblock \emph{ArXiv}, abs/1606.08415, 2016.

\bibitem[Huang et~al.(2024)Huang, Liu, Fu, Wu, Mukadam, Malik, Goldberg, and
  Abbeel]{efvla_huang2024}
Huang Huang, Fangchen Liu, Letian Fu, Tingfan Wu, Mustafa Mukadam, Jitendra
  Malik, Ken Goldberg, and Pieter Abbeel.
\newblock Early fusion helps vision language action models generalize better.
\newblock In \emph{CoRL Workshop on X-Embodiment Robot Learning}, 2024.

\bibitem[Kaelbling(1993)]{gcrl_kaelbling1993}
Leslie~Pack Kaelbling.
\newblock Learning to achieve goals.
\newblock In \emph{International Joint Conference on Artificial Intelligence
  (IJCAI)}, 1993.

\bibitem[Kim et~al.(2024)Kim, Park, and Levine]{u2o_kim2024}
Junsu Kim, Seohong Park, and Sergey Levine.
\newblock Unsupervised-to-online reinforcement learning.
\newblock \emph{ArXiv}, abs/2408.14785, 2024.

\bibitem[Kingma \& Ba(2015)Kingma and Ba]{adam_kingma2015}
Diederik~P. Kingma and Jimmy Ba.
\newblock Adam: A method for stochastic optimization.
\newblock In \emph{International Conference on Learning Representations
  (ICLR)}, 2015.

\bibitem[Kostrikov et~al.(2022)Kostrikov, Nair, and Levine]{iql_kostrikov2022}
Ilya Kostrikov, Ashvin Nair, and Sergey Levine.
\newblock Offline reinforcement learning with implicit q-learning.
\newblock In \emph{International Conference on Learning Representations
  (ICLR)}, 2022.

\bibitem[Lamb et~al.(2022)Lamb, Islam, Efroni, Didolkar, Misra, Foster, Molu,
  Chari, Krishnamurthy, and Langford]{acstate_lamb2022}
Alex Lamb, Riashat Islam, Yonathan Efroni, Aniket~Rajiv Didolkar, Dipendra
  Misra, Dylan~J Foster, Lekan~P Molu, Rajan Chari, Akshay Krishnamurthy, and
  John Langford.
\newblock Guaranteed discovery of control-endogenous latent states with
  multi-step inverse models.
\newblock \emph{Transactions on Machine Learning Research (TMLR)}, 2022.

\bibitem[Lange et~al.(2012)Lange, Gabel, and Riedmiller]{batch_lange2012}
Sascha Lange, Thomas Gabel, and Martin Riedmiller.
\newblock Batch reinforcement learning.
\newblock In \emph{Reinforcement learning: State-of-the-art}, pp.\  45--73.
  Springer, 2012.

\bibitem[Laskin et~al.(2020)Laskin, Srinivas, and Abbeel]{curl_laskin2020}
Michael Laskin, Aravind Srinivas, and Pieter Abbeel.
\newblock Curl: Contrastive unsupervised representations for reinforcement
  learning.
\newblock In \emph{International Conference on Machine Learning (ICML)}, 2020.

\bibitem[Lawson et~al.(2025)Lawson, Hugessen, Cloutier, Berseth, and
  Khetarpal]{byolgamma_lawson2025}
Daniel Lawson, Adriana Hugessen, Charlotte Cloutier, Glen Berseth, and Khimya
  Khetarpal.
\newblock Self-predictive representations for combinatorial generalization in
  behavioral cloning.
\newblock \emph{ArXiv}, abs/2506.10137, 2025.

\bibitem[Levine et~al.(2020)Levine, Kumar, Tucker, and Fu]{offline_levine2020}
Sergey Levine, Aviral Kumar, G.~Tucker, and Justin Fu.
\newblock Offline reinforcement learning: Tutorial, review, and perspectives on
  open problems.
\newblock \emph{ArXiv}, abs/2005.01643, 2020.

\bibitem[Li et~al.(2006)Li, Walsh, and Littman]{bisim_li2006}
Lihong Li, Thomas~J. Walsh, and Michael~L. Littman.
\newblock Towards a unified theory of state abstraction for mdps.
\newblock In \emph{International Symposium on Artificial Intelligence and
  Mathematics}, 2006.

\bibitem[Lynch et~al.(2019)Lynch, Khansari, Xiao, Kumar, Tompson, Levine, and
  Sermanet]{play_lynch2019}
Corey Lynch, Mohi Khansari, Ted Xiao, Vikash Kumar, Jonathan Tompson, Sergey
  Levine, and Pierre Sermanet.
\newblock Learning latent plans from play.
\newblock In \emph{Conference on Robot Learning (CoRL)}, 2019.

\bibitem[Ma et~al.(2022)Ma, Yan, Jayaraman, and Bastani]{gofar_ma2022}
Yecheng~Jason Ma, Jason Yan, Dinesh Jayaraman, and Osbert Bastani.
\newblock How far i'll go: Offline goal-conditioned reinforcement learning via
  f-advantage regression.
\newblock In \emph{Neural Information Processing Systems (NeurIPS)}, 2022.

\bibitem[Ma et~al.(2023)Ma, Sodhani, Jayaraman, Bastani, Kumar, and
  Zhang]{vip_ma2023}
Yecheng~Jason Ma, Shagun Sodhani, Dinesh Jayaraman, Osbert Bastani, Vikash
  Kumar, and Amy Zhang.
\newblock Vip: Towards universal visual reward and representation via
  value-implicit pre-training.
\newblock In \emph{International Conference on Learning Representations
  (ICLR)}, 2023.

\bibitem[Mnih et~al.(2013)Mnih, Kavukcuoglu, Silver, Graves, Antonoglou,
  Wierstra, and Riedmiller]{dqn_mnih2013}
Volodymyr Mnih, Koray Kavukcuoglu, David Silver, Alex Graves, Ioannis
  Antonoglou, Daan Wierstra, and Martin~A. Riedmiller.
\newblock Playing atari with deep reinforcement learning.
\newblock \emph{ArXiv}, abs/1312.5602, 2013.

\bibitem[Myers et~al.(2024)Myers, Zheng, Dragan, Levine, and
  Eysenbach]{cmd_myers2024}
Vivek Myers, Chongyi Zheng, Anca Dragan, Sergey Levine, and Benjamin Eysenbach.
\newblock Learning temporal distances: Contrastive successor features can
  provide a metric structure for decision-making.
\newblock In \emph{International Conference on Machine Learning (ICML)}, 2024.

\bibitem[Myers et~al.(2025)Myers, Zheng, Dragan, Fang, and
  Levine]{tra_myers2025}
Vivek Myers, Bill~Chunyuan Zheng, Anca Dragan, Kuan Fang, and Sergey Levine.
\newblock Temporal representation alignment: Successor features enable emergent
  compositionality in robot instruction following.
\newblock In \emph{Neural Information Processing Systems (NeurIPS)}, 2025.

\bibitem[Park et~al.(2023)Park, Ghosh, Eysenbach, and Levine]{hiql_park2023}
Seohong Park, Dibya Ghosh, Benjamin Eysenbach, and Sergey Levine.
\newblock Hiql: Offline goal-conditioned rl with latent states as actions.
\newblock In \emph{Neural Information Processing Systems (NeurIPS)}, 2023.

\bibitem[Park et~al.(2024{\natexlab{a}})Park, Kreiman, and
  Levine]{hilp_park2024}
Seohong Park, Tobias Kreiman, and Sergey Levine.
\newblock Foundation policies with hilbert representations.
\newblock In \emph{International Conference on Machine Learning (ICML)},
  2024{\natexlab{a}}.

\bibitem[Park et~al.(2024{\natexlab{b}})Park, Rybkin, and
  Levine]{metra_park2024}
Seohong Park, Oleh Rybkin, and Sergey Levine.
\newblock Metra: Scalable unsupervised rl with metric-aware abstraction.
\newblock In \emph{International Conference on Learning Representations
  (ICLR)}, 2024{\natexlab{b}}.

\bibitem[Park et~al.(2025{\natexlab{a}})Park, Frans, Eysenbach, and
  Levine]{ogbench_park2025}
Seohong Park, Kevin Frans, Benjamin Eysenbach, and Sergey Levine.
\newblock Ogbench: Benchmarking offline goal-conditioned rl.
\newblock In \emph{International Conference on Learning Representations
  (ICLR)}, 2025{\natexlab{a}}.

\bibitem[Park et~al.(2025{\natexlab{b}})Park, Frans, Mann, Eysenbach, Kumar,
  and Levine]{sharsa_park2025}
Seohong Park, Kevin Frans, Deepinder Mann, Benjamin Eysenbach, Aviral Kumar,
  and Sergey Levine.
\newblock Horizon reduction makes rl scalable.
\newblock In \emph{Neural Information Processing Systems (NeurIPS)},
  2025{\natexlab{b}}.

\bibitem[Park et~al.(2025{\natexlab{c}})Park, Li, and Levine]{fql_park2025}
Seohong Park, Qiyang Li, and Sergey Levine.
\newblock Flow q-learning.
\newblock In \emph{International Conference on Machine Learning (ICML)},
  2025{\natexlab{c}}.

\bibitem[Ramesh et~al.(2022)Ramesh, Dhariwal, Nichol, Chu, and
  Chen]{dalle2_ramesh2022}
Aditya Ramesh, Prafulla Dhariwal, Alex Nichol, Casey Chu, and Mark Chen.
\newblock Hierarchical text-conditional image generation with clip latents.
\newblock \emph{ArXiv}, abs/2204.06125, 2022.

\bibitem[Riehl(2017)]{category_riehl2017}
Emily Riehl.
\newblock \emph{Category theory in context}.
\newblock Courier Dover Publications, 2017.

\bibitem[Savinov et~al.(2018)Savinov, Dosovitskiy, and
  Koltun]{sptm_savinov2018}
Nikolay Savinov, Alexey Dosovitskiy, and Vladlen Koltun.
\newblock Semi-parametric topological memory for navigation.
\newblock In \emph{International Conference on Learning Representations
  (ICLR)}, 2018.

\bibitem[Schwarzer et~al.(2021)Schwarzer, Anand, Goel, Hjelm, Courville, and
  Bachman]{spr_schwarzer2021}
Max Schwarzer, Ankesh Anand, Rishab Goel, R~Devon Hjelm, Aaron Courville, and
  Philip Bachman.
\newblock Data-efficient reinforcement learning with self-predictive
  representations.
\newblock In \emph{International Conference on Learning Representations
  (ICLR)}, 2021.

\bibitem[Sermanet et~al.(2018)Sermanet, Lynch, Chebotar, Hsu, Jang, Schaal, and
  Levine]{tcn_sermanet2018}
Pierre Sermanet, Corey Lynch, Yevgen Chebotar, Jasmine Hsu, Eric Jang, Stefan
  Schaal, and Sergey Levine.
\newblock Time-contrastive networks: Self-supervised learning from video.
\newblock In \emph{IEEE International Conference on Robotics and Automation
  (ICRA)}, 2018.

\bibitem[Shah et~al.(2021)Shah, Eysenbach, Kahn, Rhinehart, and
  Levine]{recon_shah2021}
Dhruv Shah, Benjamin Eysenbach, Gregory Kahn, Nicholas Rhinehart, and Sergey
  Levine.
\newblock Recon: Rapid exploration for open-world navigation with latent goal
  models.
\newblock In \emph{Conference on Robot Learning (CoRL)}, 2021.

\bibitem[Sikchi et~al.(2024)Sikchi, Zheng, Zhang, and
  Niekum]{dualrl_sikchi2024}
Harshit~S. Sikchi, Qinqing Zheng, Amy Zhang, and Scott Niekum.
\newblock Dual rl: Unification and new methods for reinforcement and imitation
  learning.
\newblock In \emph{International Conference on Learning Representations
  (ICLR)}, 2024.

\bibitem[Steccanella \& Jonsson(2022)Steccanella and
  Jonsson]{rep_steccanella2022}
Lorenzo Steccanella and Anders Jonsson.
\newblock State representation learning for goal-conditioned reinforcement
  learning.
\newblock In \emph{European Conference on Machine Learning and Principles and
  Practice of Knowledge Discovery in Databases (ECML PKDD)}, 2022.

\bibitem[Sutton \& Barto(2005)Sutton and Barto]{rl_sutton2005}
Richard~S. Sutton and Andrew~G. Barto.
\newblock Reinforcement learning: An introduction.
\newblock \emph{IEEE Transactions on Neural Networks}, 16:\penalty0 285--286,
  2005.

\bibitem[Tishby et~al.(1999)Tishby, Pereira, and Bialek]{ib_tishby1999}
Naftali Tishby, Fernando~C Pereira, and William Bialek.
\newblock The information bottleneck method.
\newblock In \emph{Allerton Conference on Communication, Control, and
  Computing}, 1999.

\bibitem[Touati \& Ollivier(2021)Touati and Ollivier]{fb_touati2021}
Ahmed Touati and Yann Ollivier.
\newblock Learning one representation to optimize all rewards.
\newblock In \emph{Neural Information Processing Systems (NeurIPS)}, 2021.

\bibitem[Touati et~al.(2023)Touati, Rapin, and Ollivier]{zs_touati2023}
Ahmed Touati, J{\'e}r{\'e}my Rapin, and Yann Ollivier.
\newblock Does zero-shot reinforcement learning exist?
\newblock In \emph{International Conference on Learning Representations
  (ICLR)}, 2023.

\bibitem[Wainwright(2019)]{high_wainwright2019}
Martin~J Wainwright.
\newblock \emph{High-dimensional statistics: A non-asymptotic viewpoint}.
\newblock Cambridge University Press, 2019.

\bibitem[Walsman et~al.(2019)Walsman, Bisk, Gabriel, Misra, Artzi, Choi, and
  Fox]{early_walsman2019}
Aaron Walsman, Yonatan Bisk, Saadia Gabriel, Dipendra Misra, Yoav Artzi, Yejin
  Choi, and Dieter Fox.
\newblock Early fusion for goal directed robotic vision.
\newblock In \emph{IEEE/RSJ International Conference on Intelligent Robots and
  Systems (IROS)}, 2019.

\bibitem[Wang et~al.(2024)Wang, Yang, Chen, and Fang]{goplan_wang2024}
Mianchu Wang, Rui Yang, Xi~Chen, and Meng Fang.
\newblock Goplan: Goal-conditioned offline reinforcement learning by planning
  with learned models.
\newblock \emph{Transactions on Machine Learning Research (TMLR)}, 2024.

\bibitem[Wang et~al.(2023)Wang, Torralba, Isola, and Zhang]{qrl_wang2023}
Tongzhou Wang, Antonio Torralba, Phillip Isola, and Amy Zhang.
\newblock Optimal goal-reaching reinforcement learning via quasimetric
  learning.
\newblock In \emph{International Conference on Machine Learning (ICML)}, 2023.

\bibitem[Wu et~al.(2019)Wu, Tucker, and Nachum]{lap_wu2019}
Yifan Wu, G.~Tucker, and Ofir Nachum.
\newblock The laplacian in rl: Learning representations with efficient
  approximations.
\newblock In \emph{International Conference on Learning Representations
  (ICLR)}, 2019.

\bibitem[Zheng et~al.(2024)Zheng, Salakhutdinov, and
  Eysenbach]{tdinfonce_zheng2024}
Chongyi Zheng, Ruslan Salakhutdinov, and Benjamin Eysenbach.
\newblock Contrastive difference predictive coding.
\newblock In \emph{International Conference on Learning Representations
  (ICLR)}, 2024.

\end{thebibliography}
\bibliographystyle{iclr2026_conference}

\clearpage

\appendix

\section{Proofs}
\label{sec:proofs}

\begin{theorem}[Sufficiency of Dual Goal Representations]
Let $\gM=(\gS, \gA, p)$ be a CMP and $\phi^\vee$ be its dual goal representation function.
Then, there exists a deterministic policy $\pi^\vee: \gS \times \gS^\vee \to \gA$ that takes a dual goal representation as input,
such that its induced policy $\tilde \pi: \gS \times \gS \to \gA$ defined as
$\tilde \pi(s, g) := \pi^\vee(s, \phi^\vee(g))$ satisfies $V^{\tilde \pi} (s, g) = V^*(s, g)$ for all $s, g \in \gS$.
\end{theorem}

\begin{proof}
We define $\pi^\vee: \gS \times \gS^\vee \to \gA$ as
\begin{align*}
\pi^\vee(s, f) := \argmax_{a \in \gA} \E_{s' \sim p(\pl{s'} \mid s, a)}[\gamma^{f(s')}]
\end{align*}
for $s \in \gS$ and $f \in \gS^\vee$.
Then, for any $s, g \in \gS$, we have
\begin{align*}
    \tilde \pi(s, g) &= \pi^\vee(s, \phi^\vee(g)) \\
    &= \argmax_{a \in \gA} \E_{s' \sim p(\pl{s'} \mid s, a)}[\gamma^{\phi^\vee(g)(s')}] \\
    &= \argmax_{a \in \gA} \E_{s' \sim p(\pl{s'} \mid s, a)}[V^*(s', g)] \\
    &= \argmax_{a \in \gA} Q^*(s, a, g).
\end{align*}
Invoking the standard result in RL theory~\citep{rl_sutton2005}, we conclude that $V^{\tilde \pi}(s, g) = V^*(s, g)$.
\end{proof}

\begin{theorem}[Noise Invariance of Dual Goal Representations]
Let $\gM=(\gS, \gZ, \gA, p, p^e, p^\ell)$ be a BCMP and $\phi^\vee$ be its dual goal representation function.
Let $g_1, g_2 \in \gS$ be two goal observations from the same latent state; \ie, $p^\ell(g_1) = p^\ell(g_2)$.
Then, they have the same dual goal representation; \ie, $\phi^\vee(g_1) = \phi^\vee(g_2)$.
\end{theorem}

\begin{proof}
We need to show $\phi^\vee(g_1)(s) = \phi^\vee(g_2)(s)$ for all $s \in \gS$.
This follows from
\begin{align*}
    \phi^\vee(g_1)(s) &= \log_\gamma V^*(s, g_1) \\
    &= \log_\gamma (\max_\pi V^\pi(s, g_1)) \\
    &= \log_\gamma \left(\max_\pi \E_{\tau \sim p^\pi(\pl{\tau} \mid \pl{s_0} = s)}\left[\sum_{t=0}^\infty \gamma^t r^\ell(s_t, g_1)\right] \right) \\
    &= \log_\gamma \left(\max_\pi \E_{\tau \sim p^\pi(\pl{\tau} \mid \pl{s_0} = s)}\left[\sum_{t=0}^\infty \gamma^t \mathbb{I}(p^\ell(s_t) = p^\ell(g_1))\right] \right) \\
    &= \log_\gamma \left(\max_\pi \E_{\tau \sim p^\pi(\pl{\tau} \mid \pl{s_0} = s)}\left[\sum_{t=0}^\infty \gamma^t \mathbb{I}(p^\ell(s_t) = p^\ell(g_2))\right] \right) \\
    &= \log_\gamma \left(\max_\pi \E_{\tau \sim p^\pi(\pl{\tau} \mid \pl{s_0} = s)}\left[\sum_{t=0}^\infty \gamma^t r^\ell(s_t, g_2)\right] \right) \\
    &= \log_\gamma (\max_\pi V^\pi(s, g_2)) \\
    &= \log_\gamma V^*(s, g_2) \\
    &= \phi^\vee(g_2)(s),
\end{align*}
where the $\max$ is taken over the set of all non-goal-conditioned policies $\pi: \gS \to \Delta(\gA)$.
\end{proof}

\section{Algorithms}

In this section, we describe the representation learning methods and downstream offline GCRL algorithms considered in our experiments.

\subsection{Prior Representation Learning Algorithms}

\textbf{VIB~\citep{recon_shah2021, hiql_park2023}.}
VIB trains a goal representation based on information bottleneck~\citep{ib_tishby1999, vib_alemi2017}.
In our experiments, a VIB representation $\phi$ is trained directly via the value loss in the downstream GCRL algorithm (except for GCFBC, where we train it via the actor loss).
Specifically, we model a downstream value function as $V^\mathrm{GCRL}(\pl{s}, \phi(\pl{g}))$,
where $\phi(g)$ denotes a stochastic latent vector sampled from $\gN(\mu(g), \Sigma(g))$
with $\mu$ being a mean network and $\Sigma$ being a diagonal covariance network.
We train this stochastic latent-parameterized value network using the downstream GCRL loss
with an additional KL loss term,
$\beta D_\mathrm{KL} \infdivx*{\gN(\mu(g), \Sigma(g))}{\gN(0, I)}$,
to regularize the posterior distribution with a coefficient $\beta$.
In our experiments, we find that low $\beta$ values generally work the best,
and select the best $\beta$ from $\{0.001, 0.003\}$ for each environment and downstream GCRL algorithm.

\textbf{TRA~\citep{tra_myers2025}.}
TRA trains a goal representation via a symmetric InfoNCE variant~\citep{tra_myers2025} of the CRL loss~\citep{crl_eysenbach2022}.
We then use this representation to parameterize a downstream GCRL policy and downstream value functions.
In the original paper, the TRA representation also gets gradients from the downstream GCRL loss.
We perform a hyperparameter sweep on this choice, and enable gradient flow for GCFBC and disable it for GCIVL and CRL.

\textbf{BYOL-$\bm{\gamma}$~\citep{byolgamma_lawson2025}.}
BYOL-$\gamma$ trains a representation based on a temporal self-predictive representation loss~\citep{spr_schwarzer2021}.
We use the bidirectional variant in the paper,
and use the representation $\phi$ to parameterize goals in downstream policy and value networks.
We do not use $\phi$ to represent states to enable apples-to-apples comparisons with other goal representation learning algorithms.
As in TRA, we also perform a hyperparameter sweep on gradient flows from the downstream GCRL loss to the representation,
and enable it for GCFBC and disable it for GCIVL and CRL.

\textbf{VIP~\citep{vip_ma2023}.}
VIP trains a representation via a metric embedding loss~\citep{dualrl_sikchi2024}.
Specifically, it parameterizes the value function as $V(s, g) = -\|\phi(s) - \phi(g)\|_2$,
and trains $\phi$ with a dual RL loss function~\citep{gofar_ma2022, vip_ma2023, dualrl_sikchi2024}.
As with other methods, we use $\phi$ to represent (only) goals in downstream policy and value networks.

\subsection{Downstream Offline GCRL Algorithms}
In our experiments, we consider three downstream offline goal-conditioned algorithms:
GCIVL, and CRL, and GCFBC.
For GCIVL and CRL,
we use the original implementations of OGBench~\citep{ogbench_park2025},
and for GCFBC, we use the implementation by \citet{sharsa_park2025}.
We refer to these works for the full descriptions of these algorithms.

\section{Additional Results}
\label{sec:add_results}

We provide results on state-based tasks with different downstream GCRL algorithms (CRL and GCFBC) in \Cref{table:state_crl,table:state_gcfbc}.

\begin{table}[t!]
\caption{
\footnotesize
\textbf{Results on state-based tasks with CRL as the downstream algorithm.}
}
\label{table:state_crl}
\centering
\scalebox{0.9}
{

\setlength{\myboxwidth}{2.7pt}

\begin{tabularew}{l*{6}{>{\spew{.5}{+1}}r@{\,}l}}
\toprule
\multicolumn{1}{l}{\tt{Environment}} & \multicolumn{2}{c}{\makebox[\myboxwidth]{} \tt{Orig} \makebox[\myboxwidth]{}} & \multicolumn{2}{c}{\makebox[\myboxwidth]{} \tt{VIB} \makebox[\myboxwidth]{}} & \multicolumn{2}{c}{\makebox[\myboxwidth]{} \tt{VIP} \makebox[\myboxwidth]{}} & \multicolumn{2}{c}{\makebox[\myboxwidth]{} \tt{TRA} \makebox[\myboxwidth]{}} & \multicolumn{2}{c}{\makebox[\myboxwidth]{} \tt{BYOL-$\gamma$} \makebox[\myboxwidth]{}} & \multicolumn{2}{c}{\makebox[\myboxwidth]{} \tt{\color{myblue}Dual} \makebox[\myboxwidth]{}} \\
\midrule
\tt{pointmaze-medium-navigate-v0} & \tt{30} &{\tiny $\pm$\tt{7}} & \tt{0} &{\tiny $\pm$\tt{0}} & \tt{\color{myblue}{55}} &{\tiny $\pm$\tt{7}} & \tt{32} &{\tiny $\pm$\tt{2}} & \tt{49} &{\tiny $\pm$\tt{6}} & \tt{33} &{\tiny $\pm$\tt{1}} \\
\tt{pointmaze-large-navigate-v0} & \tt{46} &{\tiny $\pm$\tt{12}} & \tt{0} &{\tiny $\pm$\tt{0}} & \tt{\color{myblue}{50}} &{\tiny $\pm$\tt{5}} & \tt{35} &{\tiny $\pm$\tt{10}} & \tt{38} &{\tiny $\pm$\tt{11}} & \tt{39} &{\tiny $\pm$\tt{12}} \\
\tt{antmaze-medium-navigate-v0} & \tt{\color{myblue}{95}} &{\tiny $\pm$\tt{2}} & \tt{10} &{\tiny $\pm$\tt{1}} & \tt{\color{myblue}{94}} &{\tiny $\pm$\tt{0}} & \tt{77} &{\tiny $\pm$\tt{5}} & \tt{87} &{\tiny $\pm$\tt{3}} & \tt{\color{myblue}{93}} &{\tiny $\pm$\tt{3}} \\
\tt{antmaze-large-navigate-v0} & \tt{\color{myblue}{87}} &{\tiny $\pm$\tt{7}} & \tt{8} &{\tiny $\pm$\tt{1}} & \tt{64} &{\tiny $\pm$\tt{10}} & \tt{71} &{\tiny $\pm$\tt{14}} & \tt{77} &{\tiny $\pm$\tt{4}} & \tt{\color{myblue}{87}} &{\tiny $\pm$\tt{2}} \\
\tt{antmaze-giant-navigate-v0} & \tt{11} &{\tiny $\pm$\tt{3}} & \tt{0} &{\tiny $\pm$\tt{0}} & \tt{4} &{\tiny $\pm$\tt{1}} & \tt{3} &{\tiny $\pm$\tt{1}} & \tt{2} &{\tiny $\pm$\tt{1}} & \tt{\color{myblue}{21}} &{\tiny $\pm$\tt{4}} \\
\tt{humanoidmaze-medium-navigate-v0} & \tt{\color{myblue}{57}} &{\tiny $\pm$\tt{2}} & \tt{3} &{\tiny $\pm$\tt{1}} & \tt{43} &{\tiny $\pm$\tt{1}} & \tt{\color{myblue}{57}} &{\tiny $\pm$\tt{1}} & \tt{46} &{\tiny $\pm$\tt{1}} & \tt{\color{myblue}{57}} &{\tiny $\pm$\tt{4}} \\
\tt{humanoidmaze-large-navigate-v0} & \tt{20} &{\tiny $\pm$\tt{5}} & \tt{1} &{\tiny $\pm$\tt{0}} & \tt{13} &{\tiny $\pm$\tt{3}} & \tt{\color{myblue}{31}} &{\tiny $\pm$\tt{3}} & \tt{22} &{\tiny $\pm$\tt{7}} & \tt{18} &{\tiny $\pm$\tt{4}} \\
\tt{antsoccer-arena-navigate-v0} & \tt{\color{myblue}{22}} &{\tiny $\pm$\tt{2}} & \tt{1} &{\tiny $\pm$\tt{0}} & \tt{3} &{\tiny $\pm$\tt{1}} & \tt{7} &{\tiny $\pm$\tt{2}} & \tt{9} &{\tiny $\pm$\tt{2}} & \tt{19} &{\tiny $\pm$\tt{3}} \\
\tt{cube-single-play-v0} & \tt{20} &{\tiny $\pm$\tt{3}} & \tt{7} &{\tiny $\pm$\tt{2}} & \tt{42} &{\tiny $\pm$\tt{6}} & \tt{17} &{\tiny $\pm$\tt{3}} & \tt{29} &{\tiny $\pm$\tt{5}} & \tt{\color{myblue}{60}} &{\tiny $\pm$\tt{1}} \\
\tt{cube-double-play-v0} & \tt{11} &{\tiny $\pm$\tt{3}} & \tt{2} &{\tiny $\pm$\tt{1}} & \tt{4} &{\tiny $\pm$\tt{1}} & \tt{5} &{\tiny $\pm$\tt{2}} & \tt{3} &{\tiny $\pm$\tt{2}} & \tt{\color{myblue}{24}} &{\tiny $\pm$\tt{5}} \\
\tt{scene-play-v0} & \tt{20} &{\tiny $\pm$\tt{2}} & \tt{4} &{\tiny $\pm$\tt{1}} & \tt{18} &{\tiny $\pm$\tt{3}} & \tt{22} &{\tiny $\pm$\tt{5}} & \tt{33} &{\tiny $\pm$\tt{2}} & \tt{\color{myblue}{44}} &{\tiny $\pm$\tt{5}} \\
\tt{puzzle-3x3-play-v0} & \tt{4} &{\tiny $\pm$\tt{1}} & \tt{2} &{\tiny $\pm$\tt{1}} & \tt{3} &{\tiny $\pm$\tt{1}} & \tt{3} &{\tiny $\pm$\tt{0}} & \tt{0} &{\tiny $\pm$\tt{0}} & \tt{\color{myblue}{6}} &{\tiny $\pm$\tt{1}} \\
\tt{puzzle-4x4-play-v0} & \tt{0} &{\tiny $\pm$\tt{0}} & \tt{0} &{\tiny $\pm$\tt{0}} & \tt{0} &{\tiny $\pm$\tt{0}} & \tt{0} &{\tiny $\pm$\tt{0}} & \tt{0} &{\tiny $\pm$\tt{0}} & \tt{\color{myblue}{2}} &{\tiny $\pm$\tt{0}} \\
\midrule
\tt{Average} & \tt{33} &{\tiny $\pm$\tt{2}} & \tt{3} &{\tiny $\pm$\tt{0}} & \tt{30} &{\tiny $\pm$\tt{1}} & \tt{28} &{\tiny $\pm$\tt{1}} & \tt{30} &{\tiny $\pm$\tt{1}} & \tt{\color{myblue}{39}} &{\tiny $\pm$\tt{1}} \\
\bottomrule
\end{tabularew}

}
\end{table}

\begin{table}[t!]
\caption{
\footnotesize
\textbf{Results on state-based tasks with GCFBC as the downstream algorithm.}
}
\label{table:state_gcfbc}
\centering
\scalebox{0.9}
{

\setlength{\myboxwidth}{2.7pt}

\begin{tabularew}{l*{6}{>{\spew{.5}{+1}}r@{\,}l}}
\toprule
\multicolumn{1}{l}{\tt{Environment}} & \multicolumn{2}{c}{\makebox[\myboxwidth]{} \tt{Orig} \makebox[\myboxwidth]{}} & \multicolumn{2}{c}{\makebox[\myboxwidth]{} \tt{VIB} \makebox[\myboxwidth]{}} & \multicolumn{2}{c}{\makebox[\myboxwidth]{} \tt{VIP} \makebox[\myboxwidth]{}} & \multicolumn{2}{c}{\makebox[\myboxwidth]{} \tt{TRA} \makebox[\myboxwidth]{}} & \multicolumn{2}{c}{\makebox[\myboxwidth]{} \tt{BYOL-$\gamma$} \makebox[\myboxwidth]{}} & \multicolumn{2}{c}{\makebox[\myboxwidth]{} \tt{\color{myblue}Dual} \makebox[\myboxwidth]{}} \\
\midrule
\tt{pointmaze-medium-navigate-v0} & \tt{63} &{\tiny $\pm$\tt{3}} & \tt{48} &{\tiny $\pm$\tt{4}} & \tt{53} &{\tiny $\pm$\tt{8}} & \tt{\color{myblue}{77}} &{\tiny $\pm$\tt{4}} & \tt{\color{myblue}{77}} &{\tiny $\pm$\tt{2}} & \tt{73} &{\tiny $\pm$\tt{3}} \\
\tt{pointmaze-large-navigate-v0} & \tt{70} &{\tiny $\pm$\tt{4}} & \tt{69} &{\tiny $\pm$\tt{2}} & \tt{65} &{\tiny $\pm$\tt{5}} & \tt{\color{myblue}{76}} &{\tiny $\pm$\tt{4}} & \tt{\color{myblue}{75}} &{\tiny $\pm$\tt{4}} & \tt{66} &{\tiny $\pm$\tt{7}} \\
\tt{antmaze-medium-navigate-v0} & \tt{44} &{\tiny $\pm$\tt{1}} & \tt{10} &{\tiny $\pm$\tt{2}} & \tt{18} &{\tiny $\pm$\tt{2}} & \tt{\color{myblue}{56}} &{\tiny $\pm$\tt{5}} & \tt{51} &{\tiny $\pm$\tt{4}} & \tt{53} &{\tiny $\pm$\tt{9}} \\
\tt{antmaze-large-navigate-v0} & \tt{20} &{\tiny $\pm$\tt{1}} & \tt{2} &{\tiny $\pm$\tt{1}} & \tt{7} &{\tiny $\pm$\tt{3}} & \tt{29} &{\tiny $\pm$\tt{2}} & \tt{29} &{\tiny $\pm$\tt{4}} & \tt{\color{myblue}{31}} &{\tiny $\pm$\tt{5}} \\
\tt{antmaze-giant-navigate-v0} & \tt{\color{myblue}{0}} &{\tiny $\pm$\tt{0}} & \tt{\color{myblue}{0}} &{\tiny $\pm$\tt{0}} & \tt{\color{myblue}{0}} &{\tiny $\pm$\tt{0}} & \tt{\color{myblue}{0}} &{\tiny $\pm$\tt{0}} & \tt{\color{myblue}{0}} &{\tiny $\pm$\tt{0}} & \tt{\color{myblue}{0}} &{\tiny $\pm$\tt{0}} \\
\tt{humanoidmaze-medium-navigate-v0} & \tt{8} &{\tiny $\pm$\tt{1}} & \tt{3} &{\tiny $\pm$\tt{1}} & \tt{5} &{\tiny $\pm$\tt{1}} & \tt{\color{myblue}{9}} &{\tiny $\pm$\tt{1}} & \tt{8} &{\tiny $\pm$\tt{2}} & \tt{8} &{\tiny $\pm$\tt{2}} \\
\tt{humanoidmaze-large-navigate-v0} & \tt{\color{myblue}{1}} &{\tiny $\pm$\tt{1}} & \tt{0} &{\tiny $\pm$\tt{0}} & \tt{0} &{\tiny $\pm$\tt{0}} & \tt{\color{myblue}{1}} &{\tiny $\pm$\tt{0}} & \tt{\color{myblue}{1}} &{\tiny $\pm$\tt{0}} & \tt{\color{myblue}{1}} &{\tiny $\pm$\tt{0}} \\
\tt{antsoccer-arena-navigate-v0} & \tt{16} &{\tiny $\pm$\tt{1}} & \tt{3} &{\tiny $\pm$\tt{1}} & \tt{3} &{\tiny $\pm$\tt{0}} & \tt{11} &{\tiny $\pm$\tt{1}} & \tt{12} &{\tiny $\pm$\tt{1}} & \tt{\color{myblue}{17}} &{\tiny $\pm$\tt{2}} \\
\tt{cube-single-play-v0} & \tt{12} &{\tiny $\pm$\tt{1}} & \tt{10} &{\tiny $\pm$\tt{2}} & \tt{14} &{\tiny $\pm$\tt{1}} & \tt{12} &{\tiny $\pm$\tt{3}} & \tt{10} &{\tiny $\pm$\tt{1}} & \tt{\color{myblue}{20}} &{\tiny $\pm$\tt{2}} \\
\tt{cube-double-play-v0} & \tt{3} &{\tiny $\pm$\tt{0}} & \tt{3} &{\tiny $\pm$\tt{1}} & \tt{3} &{\tiny $\pm$\tt{1}} & \tt{3} &{\tiny $\pm$\tt{1}} & \tt{2} &{\tiny $\pm$\tt{1}} & \tt{\color{myblue}{5}} &{\tiny $\pm$\tt{1}} \\
\tt{scene-play-v0} & \tt{19} &{\tiny $\pm$\tt{4}} & \tt{6} &{\tiny $\pm$\tt{1}} & \tt{18} &{\tiny $\pm$\tt{4}} & \tt{20} &{\tiny $\pm$\tt{5}} & \tt{13} &{\tiny $\pm$\tt{2}} & \tt{\color{myblue}{27}} &{\tiny $\pm$\tt{3}} \\
\tt{puzzle-3x3-play-v0} & \tt{\color{myblue}{3}} &{\tiny $\pm$\tt{1}} & \tt{2} &{\tiny $\pm$\tt{1}} & \tt{2} &{\tiny $\pm$\tt{0}} & \tt{2} &{\tiny $\pm$\tt{0}} & \tt{2} &{\tiny $\pm$\tt{1}} & \tt{2} &{\tiny $\pm$\tt{0}} \\
\tt{puzzle-4x4-play-v0} & \tt{\color{myblue}{2}} &{\tiny $\pm$\tt{1}} & \tt{0} &{\tiny $\pm$\tt{0}} & \tt{0} &{\tiny $\pm$\tt{0}} & \tt{1} &{\tiny $\pm$\tt{0}} & \tt{0} &{\tiny $\pm$\tt{0}} & \tt{0} &{\tiny $\pm$\tt{0}} \\
\midrule
\tt{Average} & \tt{20} &{\tiny $\pm$\tt{0}} & \tt{12} &{\tiny $\pm$\tt{0}} & \tt{14} &{\tiny $\pm$\tt{1}} & \tt{\color{myblue}{23}} &{\tiny $\pm$\tt{1}} & \tt{\color{myblue}{22}} &{\tiny $\pm$\tt{1}} & \tt{\color{myblue}{23}} &{\tiny $\pm$\tt{1}} \\
\bottomrule
\end{tabularew}

}
\end{table}

\clearpage

\begin{figure}[t!]
    \centering
    \includegraphics[width=1.0\textwidth]{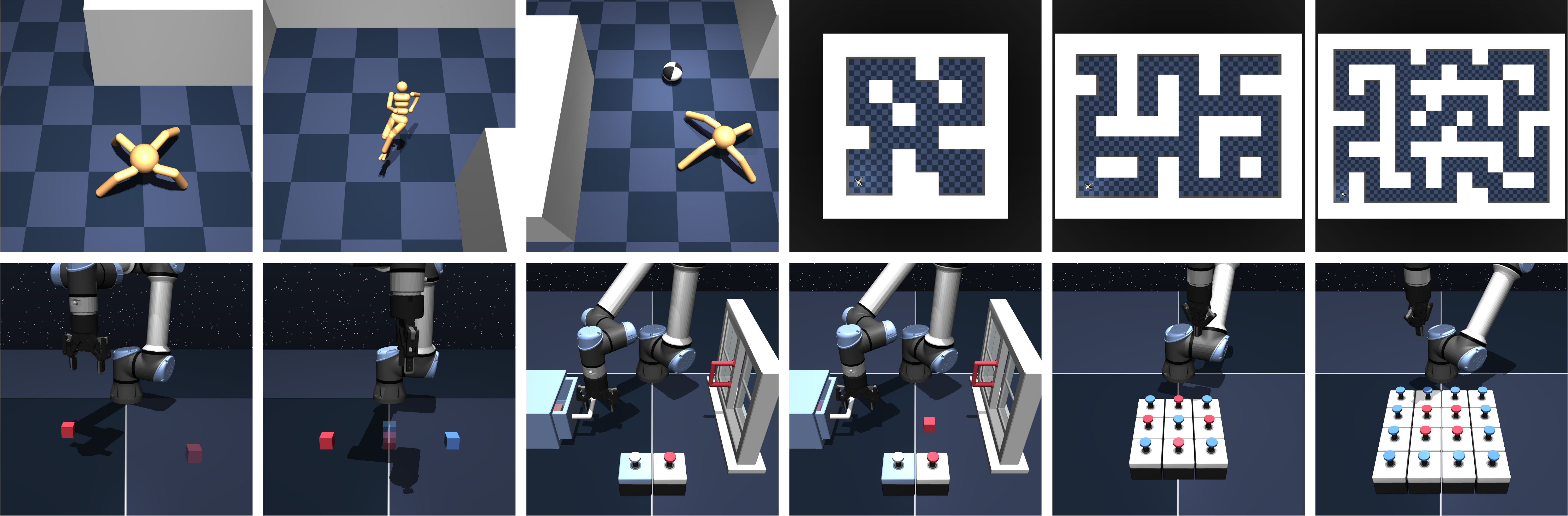}
    \caption{
    \footnotesize
    \textbf{OGBench environments.}
    OGBench provides diverse state- and pixel-based goal-conditioned tasks
    across robotic navigation and manipulation.
    This figure is taken from \citet{fql_park2025}.
    }
    \label{fig:envs}
\end{figure}

\section{Implementation Details}
\label{sec:impl_details}
Our method and baselines are implemented on top of the default implementations given in the OGBench codebase \citep{ogbench_park2025}.
We provide the code and instructions at \url{https://github.com/deepindermann/dual-goal-representations}.

\subsection{Discrete Puzzle Experiments}
\label{sec:impl_pure}

\textbf{Tasks and datasets.}
In \Cref{sec:exp_pure}, we employ a discrete ``Lights Out'' puzzle environment.
In this environment,
a state consists of $n_x \times n_y$ binary button states (\ie, $\gS = \{0, 1\}^{n_xn_y}$),
and an action corresponds to pressing a button (\ie, $\gA = \{0, 1, \ldots, n_xn_y-1\}$).
Whenever the agent presses a button, the colors of that button and the adjacent ones get toggled.
We employ two different puzzle sizes, $(n_x, n_y) = (4, 5)$ and $(4, 6)$,
which we call \tt{puzzle-4x5} and \tt{puzzle-4x6}, respectively.
At test time, we evaluate the agent with the five pre-defined state-goal configurations used in OGBench~\citep{ogbench_park2025},
where the maximum evaluation episode length is set to $n_xn_y$.
For datasets, we employ $40000$ length-$25$ trajectories generated by a uniform random policy.

\textbf{Training.}
To compute dual representations, we first randomly sample $64$ states,
run breadth-first search to compute temporal distances,
and use the list of temporal distances from the $64$ states as a dual representation.
For downstream goal-conditioned DQN, we use the $-1$-$0$ sparse reward function (see \Cref{sec:impl_ogb}).
We train agents for $1$M steps, and evaluate the $\argmax$ DQN policies
using $15$ episodes for each of the five evaluation tasks.

\textbf{Hyperparameters.}
We provide the list of hyperparameters in \Cref{table:hyp_pure}.
In the table, $(p^\gD_\mathrm{cur}, p^\gD_\mathrm{geom}, p^\gD_\mathrm{traj}, p^\gD_\mathrm{rand})$
denotes the hindsight relabeling ratio tuple defined in \citet{ogbench_park2025}.

\subsection{OGBench Experiments}
\label{sec:impl_ogb}

\textbf{Visual encoders.}
In pixel-based tasks, there are several design choices regarding visual encoders
(\eg, whether to share state or goal encoders across different networks, how to flow gradients, etc.).
We test various variants regarding these choices and find the following configuration to work best.
For the original representation baseline,
we use separate ``early fusion'' (see \Cref{sec:exp_pixel}) encoders for each of the policy and value network.
For other goal representation methods,
we share a single state visual encoder and a single goal visual encoder for all networks (with ``late fusion''),
and they get gradients from all downstream GCRL losses as well as the representation learning loss.
However, we do not propagate downstream GCRL gradients through the representation function $\phi$
(except for VIB, which is designed to be trained with the downstream value loss).
We refer to our code for the full details about gradient flows.

\textbf{Metrics.}
Following OGBench~\citep{ogbench_park2025},
we report the average performance over the last three evaluation epochs in tables
(\ie, over $800$K, $900$K, and $1$M gradient steps for state-based tasks and over $300$K, $400$K, and $500$K steps for pixel-based tasks).
We use $50$ (state-based) or $15$ (pixel-based) episodes for each of the five evaluation goals.

\textbf{Hyperparameters.}
We provide the list of hyperparameters in \Cref{table:hyp_ogb}.
Following OGBench~\citep{ogbench_park2025}, we apply layer normalization~\citep{ln_ba2016} to all networks
except for the policies in GCIVL and CRL.

\begin{table}[h!]
\caption{
\footnotesize
\textbf{Hyperparameters for discrete puzzle experiments.}
}
\vspace{-5pt}
\label{table:hyp_pure}
\begin{center}
\scalebox{0.75}
{
\begin{tabular}{ll}
    \toprule
    \textbf{Hyperparameter} & \textbf{Value} \\
    \midrule
    Gradient steps & $10^6$ (state-based) \\
    Optimizer & Adam~\citep{adam_kingma2015} \\
    Learning rate & $0.0001$ \\
    Batch size & $1024$ \\
    MLP size & $[1024, 1024, 1024, 1024]$ \\
    Nonlinearity & GELU~\citep{gelu_hendrycks2016} \\
    Layer normalization & True \\
    Target network update rate & $0.005$ \\
    Discount factor $\gamma$ & $0.95$ \\
    DQN $(p^\gD_\mathrm{cur}, p^\gD_\mathrm{geom}, p^\gD_\mathrm{traj}, p^\gD_\mathrm{rand})$ ratio & $(0.2, 0, 0.5, 0.3)$ \\
    \bottomrule
\end{tabular}
}
\end{center}
\end{table}

\begin{table}[h!]
\caption{
\footnotesize
\textbf{Hyperparameters for OGBench experiments.}
}
\vspace{-5pt}
\label{table:hyp_ogb}
\begin{center}
\scalebox{0.75}
{
\begin{tabular}{ll}
    \toprule
    \textbf{Hyperparameter} & \textbf{Value} \\
    \midrule
    Gradient steps & $10^6$ (state-based), $5 \times 10^5$ (pixel-based) \\
    Optimizer & Adam~\citep{adam_kingma2015} \\
    Learning rate & $0.0003$ \\
    Batch size & $1024$ (state-based), $256$ (pixel-based) \\
    MLP size & $[512, 512, 512]$ \\
    Nonlinearity & GELU~\citep{gelu_hendrycks2016} \\
    Target network update rate & $0.005$ \\
    Discount factor $\gamma$ & $0.99$ (default), $0.995$ (\tt{humanoidmaze}, \tt{antmaze-giant}) \\
    Goal representation dimensionality $N$ & $256$ (default), $64$ (\tt{pointmaze}) \\
    Dual representation's GCIQL expectile $\kappa$ & $0.7$ (default), $0.9$ (\tt{navigate}) \\
    VIB $\beta$ & $0.001$ (default), $0.003$ (\tt{pointmaze}, \tt{antmaze-large}) \\
    Representation $(p^\gD_\mathrm{cur}, p^\gD_\mathrm{geom}, p^\gD_\mathrm{traj}, p^\gD_\mathrm{rand})$ ratio (Dual) & $(0.2, 0.5, 0, 0.3)$ \\
    Representation $(p^\gD_\mathrm{cur}, p^\gD_\mathrm{geom}, p^\gD_\mathrm{traj}, p^\gD_\mathrm{rand})$ ratio (TRA, BYOL-$\gamma$) & $(0, 1, 0, 0)$ \\
    Downstream GCIVL expectile & $0.9$ \\
    Downstream policy extraction hyperparameters & Identical to OGBench~\citep{ogbench_park2025} \\
    Downstream policy $(p^\gD_\mathrm{cur}, p^\gD_\mathrm{geom}, p^\gD_\mathrm{traj}, p^\gD_\mathrm{rand})$ ratio  & $(0, 0, 1, 0)$ \\
    Downstream value $(p^\gD_\mathrm{cur}, p^\gD_\mathrm{geom}, p^\gD_\mathrm{traj}, p^\gD_\mathrm{rand})$ ratio (default) & $(0.2, 0.5, 0, 0.3)$ \\
    Downstream value $(p^\gD_\mathrm{cur}, p^\gD_\mathrm{geom}, p^\gD_\mathrm{traj}, p^\gD_\mathrm{rand})$ ratio (CRL) & $(0, 1, 0, 0)$ \\
    Goal noise for \Cref{fig:noisy} & $0.3$ (\tt{antmaze}), $0.2$ (\tt{cube}, \tt{scene}) \\
    Visual encoder & \tt{impala\_small}~\citep{impala_espeholt2018, ogbench_park2025} \\
    Image augmentation probability & $0.5$ \\
    \bottomrule
\end{tabular}
}
\end{center}
\end{table}

\end{document}